\theoremstyle{plain}
\newtheorem{theorem}{Theorem}[section]
\theoremstyle{definition}
\newtheorem{definition}[theorem]{Definition}
\theoremstyle{remark}
\icmltitlerunning{Diffusion-based Adversarial Purification from the Perspective of the Frequency Domain}
\begin{document}

\twocolumn[
\icmltitle{Diffusion-based Adversarial Purification\\ from the Perspective of the Frequency Domain }




\begin{icmlauthorlist}
\icmlauthor{Gaozheng Pei}{eece-ucas}
\icmlauthor{Ke Ma}{eece-ucas}
\icmlauthor{Yingfei Sun}{eece-ucas}
\icmlauthor{Qianqian Xu}{ict}
\icmlauthor{Qingming Huang}{cs-ucas,ict,bdmkm}

\end{icmlauthorlist}

\icmlaffiliation{eece-ucas}{School of Electronic, Electrical and Communication Engineering, UCAS, Beijing.}
\icmlaffiliation{ict}{Key Laboratory of Intelligent Information Processing, Institute of Computing Technology, CAS, Beijing.}
\icmlaffiliation{cs-ucas}{School of Computer Science and Technology, UCAS, Beijing.}
\icmlaffiliation{bdmkm}{Key Laboratory of Big Data Mining and Knowledge Management, UCAS, Beijing.}

\icmlcorrespondingauthor{Ke Ma}{make@ucas.ac.cn}
\icmlcorrespondingauthor{Qingming Huang}{qmhuang@ucas.ac.cn}

\icmlkeywords{Machine Learning, ICML}

\vskip 0.3in]



\printAffiliationsAndNotice{}  

\begin{abstract}

The diffusion-based adversarial purification methods attempt to drown adversarial perturbations into a part of isotropic noise through the forward process, and then recover the clean images through the reverse process. Due to the lack of distribution information about adversarial perturbations in the pixel domain, it is often unavoidable to damage normal semantics. We turn to the frequency domain perspective, decomposing the image into amplitude spectrum and phase spectrum. We find that for both spectra, the damage caused by adversarial perturbations tends to increase monotonically with frequency. This means that we can extract the content and structural information of the original clean sample from the frequency components that are less damaged. Meanwhile, theoretical analysis indicates that existing purification methods indiscriminately damage all frequency components, leading to excessive damage to the image. Therefore, we propose a purification method that can eliminate adversarial perturbations while maximizing the preservation of the content and structure of the original image. Specifically, at each time step during the reverse process, for the amplitude spectrum, we replace the low-frequency components of the estimated image's amplitude spectrum with the corresponding parts of the adversarial image.
For the phase spectrum, we project the phase of the estimated image into a designated range of the adversarial image's phase spectrum, focusing on the low frequencies. Empirical evidence from extensive experiments demonstrates that our method significantly outperforms most current defense methods. Code is available at \url{https://github.com/GaozhengPei/FreqPure}.
\end{abstract}
\section{Introduction}

Adversarial purification \cite{diffpure,GDMAP,robust-evaluation,Contrastive,mimic,lorid} is a data preprocessing technique aimed at transforming adversarial images back into their original clean images during the testing phase. Compared to adversarial training \cite{gosch2024adversarial,wang2024revisiting,singh2024revisiting}, it offers advantages such as decoupling training and testing, and strong generalization. The main challenge faced by adversarial purification is how to eliminate adversarial perturbations while preserving the original semantic information as much as possible. Therefore, it is essential to explore how adversarial perturbations damage the images.

Current research \cite{chen2022rethinking,wang2020high,Rethinking-Frequency,maiya2021frequency,Rethinking-Frequency,maiya2023unifying} studies the distribution of adversarial perturbations in the frequency domain, but they lack quantitative analysis and typically do not distinguish between amplitude spectrum and phase spectrum. Due to the different distribution characteristics of amplitude spectrum and phase spectrum, in this paper, we further decompose images into amplitude spectrum and phase spectrum. We statistically analyze the variations in amplitude spectrum and phase spectrum of original clean images and adversarial images across multiple models, attack methods, and different perturbation radii (more experimental results and details in the Appendix \ref{More}). From Figure \ref{change} we find that for both the amplitude spectrum and the phase spectrum, the degradation caused by adversarial perturbations exhibits an approximately monotonically increasing trend with frequency. 

Existing methods of diffusion-based adversarial purification attempt to drown adversarial perturbations as part of isotropic noise through the forward process, and then recover clean images through the reverse process.  However, we theoretically prove that this strategy will destroy all frequency components in both the amplitude spectrum and phase spectrum and this destruction becomes increasingly severe as the time-step $t$ increases. The empirical  findings above suggest that if we want to restore adversarial images to their original clean images, we should minimize the disruption to the low-frequency amplitude spectrum and phase spectrum to ensure consistency and preserve the inherent characteristics of the original images. Therefore, we believe that current diffusion-based adversarial purification methods cause excessive damage to the semantic information of the input images. 

\begin{figure}[!t]
    \centering
    \includegraphics[width=0.5\textwidth]{./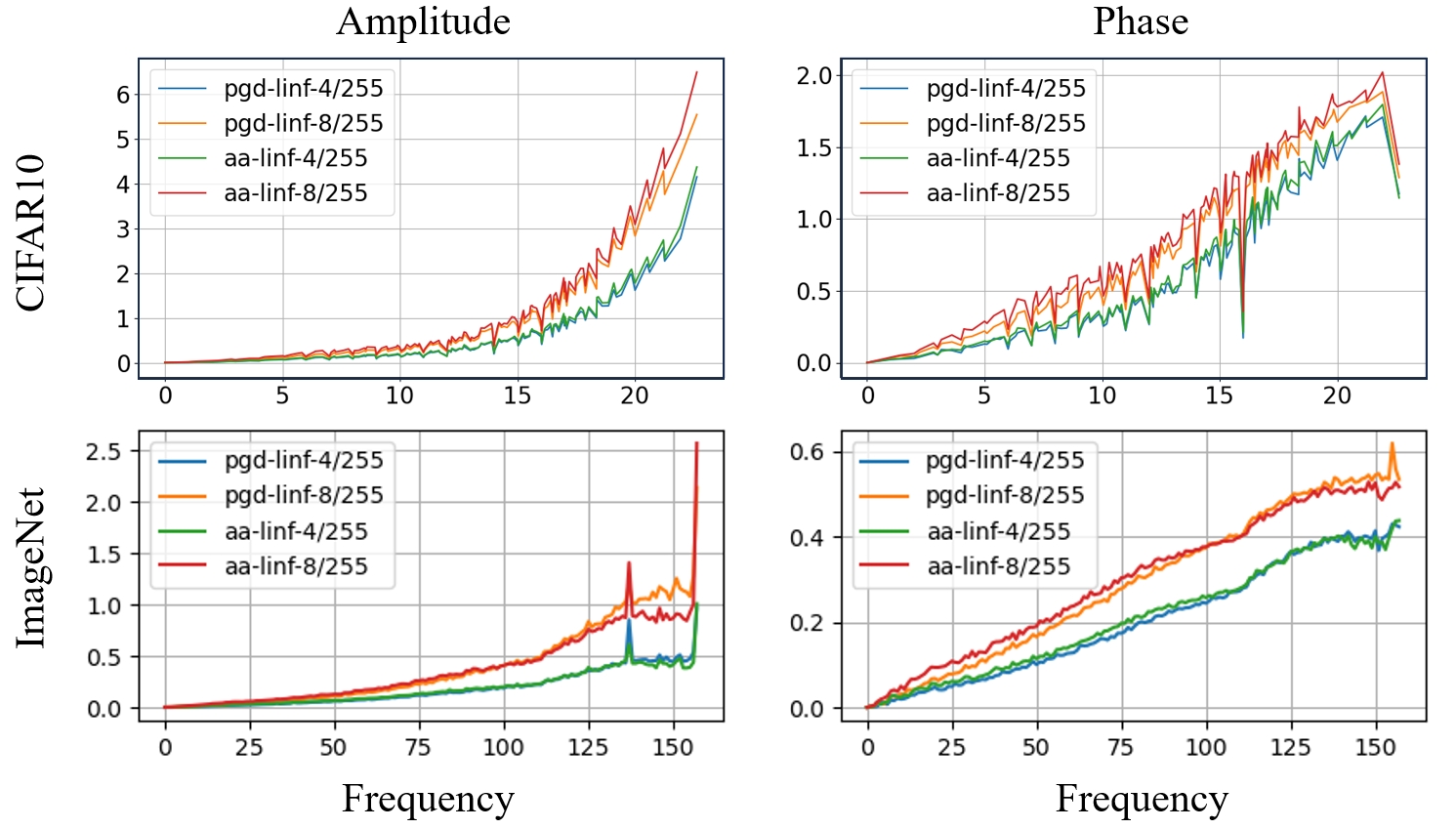}
    \caption{We decompose the image into the amplitude spectrum (left) and the phase spectrum (right), and calculate the differences between the adversarial images and the original images, respectively. The damage caused by adversarial perturbations tends to increase monotonically with frequency for both spectra.}
    \label{change}
\end{figure}

Motivated by experimental findings and theoretical analysis, we propose a novel adversarial purification method that can remove adversarial perturbations from adversarial images while minimizing destruction to the image. Specifically, at each time-step during the reverse process, we decompose the predicted estimated clean image  into its amplitude spectrum and phase spectrum. For amplitude spectrum, since low-frequency components of the adversarial image are almost unaffected, we replace the low-frequency part of the estimated image's amplitude spectrum with the low-frequency part of the adversarial image's amplitude spectrum. For phase spectrum,  we project the estimated image's low-frequency phase spectrum onto a certain range of the adversarial image's low-frequency phase spectrum. This is because the low-frequency phase spectrum is less affected by adversarial perturbations, allowing us to extract coarse-grained structural information from the image while gradually aligning it with the low-frequency phase information of natural images. Our proposed method selectively retains low-frequency phase and amplitude spectrum information, which not only preserves some structural and content information of the image but also provides prior information for the restoration of high-frequency details.
 \\
Overall, the contribution of this paper is as follows:
\begin{enumerate}
    \item We decompose the image into amplitude spectrum and phase spectrum, and explore how adversarial perturbation disturbs the original image from the perspective of the frequency domain.
    \item We theoretically demonstrate that current diffusion-based purification methods excessively destroy the amplitude spectrum and phase spectrum of input images.
    \item  Our proposed method retains the original structural information and content while eliminating adversarial perturbations by selectively preserving the amplitude spectrum and phase spectrum at each time-step.
    \item  Extensive experiments show that our method outperforms other methods by a promising improvement against various adversarial attacks.
\end{enumerate}
\section{Related Work}
Adversarial purification is a technique designed to eliminate adversarial perturbations from input images before classification, ensuring more robust model performance. These purification approaches can be categorized into two primary paradigms: training-based methods that necessitate dataset preparation, and diffusion-based techniques that offer a training-agnostic solution, capable of operating without direct access to original training data.
\subsection{Training-Based Adversarial Purification}
\cite{pixeldefend} empirically demonstrates that adversarial examples predominantly exist in the low probability areas of the training distribution and aims to redirect them back towards this distribution. \cite{defensegan} initially models the distribution of clean images and subsequently seeks the nearest clean sample to the adversarial example during inference.  \cite{SSP} generates perturbed images using a self-supervised perturbation attack that disrupts the deep perceptual features and projects back the perturbed images close to the perceptual space of clean images. \cite{Invariant} introduces a method to learn generalizable invariant features across various attacks using an encoder, engaging in a zero-sum game to reconstruct the original image with a decoder.  \cite{ATOP} combines adversarial training and purification techniques via employing random transforms to disrupt adversarial perturbations and fine-tunes a purifier model using adversarial loss.  \cite{RFGSM} leverages the phenomenon of FGSM robust overfitting to enhance the robustness of deep neural networks against unknown adversarial attacks. Nonetheless, these approaches necessitate training on the training dataset, which is time-intensive and lacks generalizability.
\subsection{Diffusion-based Adversarial Purification}
\cite{score-based} shows that an energy-based model trained with denoising score-matching can quickly purify attacked images within a few steps. \cite{diffpure} begins by adding a small amount of noise to the images and then recovers the clean image through a reverse generative process \cite{GDMAP} suggests using the adversarial image as a reference during the reverse process, which helps ensure that the purified image aligns closely with the original clean image. \cite{robust-evaluation} proposed a new gradient estimation method and introduced a stepwise noise scheduling strategy to enhance the effectiveness of the current purification methods. \cite{mimic} proposes a method that reduces the negative impact of adversarial perturbations by mimicking the generative process with clean images as input. \cite{Contrastive} designs the forward process with the proper amount of Gaussian noise added and the reverse process with the gradient of contrastive loss as the guidance of diffusion models for adversarial purification. However, our theoretical analysis demonstrates that these methods excessively disrupt the semantic information of the images.
\begin{figure*}[htbp]
    \centering
    \includegraphics[width=1.0\textwidth]{./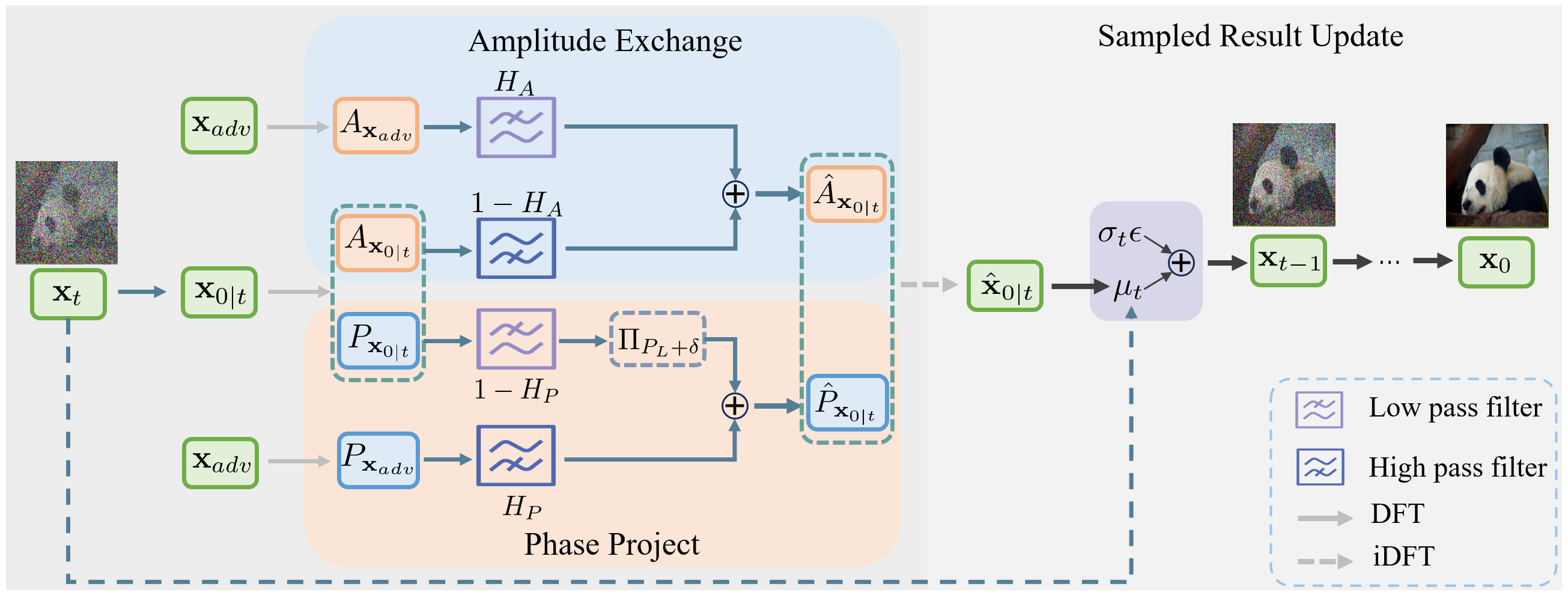}
    \caption{Pipeline of our method. The core of our method is to preserve, at each time-step of the reverse process, the amplitude and phase spectrum information of the original clean samples extracted from adversarial images as a prior. This ensures the retention of the original image's content and structural information while also providing guidance for the restoration of high-frequency details.}
    \label{method}
\end{figure*}
\section{Theoretical Study}
\label{Theoretical}
We perform discrete Fourier transform (DFT) on the input image $\mathbf{x}_0$ and the noisy image $\mathbf{x}_t$ obtained using forward process \cite{DDPM} at time-step $t$ as follows:
\begin{equation}
\begin{aligned}
\mathbf{x}_0(u,v)&=DFT(\mathbf{x}_0)=|\mathbf{x}_0(u,v)|e^{i\phi_{\mathbf{x}_0}(u,v)},\\    \mathbf{x}_t(u,v)&=DFT(\mathbf{x}_t)=|\mathbf{x}_t(u,v)|e^{i\phi_{\mathbf{x}_t}(u,v)},
\end{aligned}
\end{equation}
where $u$ and $v$ are coordinates at frequency domain. $|\mathbf{x}_0(u,v)|$ and $|\mathbf{x}_t(u,v)|$ denote the amplitude spectrum, $\phi_{\mathbf{x}_0}(u,v)$ and $\phi_{\mathbf{x}_t}(u,v)$ denote the phase spectrum.
\begin{definition}
(Difference of  amplitude) Given the  amplitude $|\mathbf{x}_0(u,v)|$ of the input image $\mathbf{x}_0$ and the amplitude $|\mathbf{x}_t(u,v)|$ of the noisy image $\mathbf{x}_t$, respectively. We definite the difference between there two amplitude of the images at arbitrary coordinate $(u,v)$ are as follows:
\begin{equation}
        \Delta A_t(u,v)=|\mathbf{x}_t(u,v)|-|\mathbf{x}_0(u,v)|.
\end{equation}
This value represents the degree of variation in the image content.
\end{definition}

\begin{theorem}
\label{amplitude-theorem}
    (Proof in Appendix \ref{proof-3-2}) The variance of the difference of amplitude at time-step $t$ between clean image $\mathbf{x}_0$ and noisy image $\mathbf{x}_t$ at arbitrary coordinates $(u,v)$ at frequency domain is as follows:
    \begin{equation}
        Var(\Delta A_t(u,v))\approx\frac{1-\overline{\alpha}_t}{2} -\frac{(1-\overline{\alpha}_t)^2}{16|\mathbf{x}_0(u,v)|\overline{\alpha}_t}.
    \end{equation}
    The RHS is monotonically increasing with respect to $t$, This means that as $t$ increases, the amplitude spectrum of the original image at arbitrary coordinate $(u,v)$  is increasingly disrupted by noise.
\end{theorem}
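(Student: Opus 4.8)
The plan is to exploit the closed form of the forward process together with linearity of the DFT, and then to recognize the amplitude $|\mathbf{x}_t(u,v)|$ as the modulus of a complex Gaussian with non-zero mean (a Rice-distributed quantity) whose variance can be expanded in the small-noise regime. First I would recall that the forward process of \cite{DDPM} gives $\mathbf{x}_t=\sqrt{\overline{\alpha}_t}\,\mathbf{x}_0+\sqrt{1-\overline{\alpha}_t}\,\boldsymbol{\epsilon}$ with $\boldsymbol{\epsilon}$ standard Gaussian; since the (unitary) DFT is linear and sends real white noise to a circularly-symmetric complex Gaussian whose real and imaginary parts are independent with variance $1/2$ (at a generic frequency), at each coordinate we obtain $\mathbf{x}_t(u,v)=\sqrt{\overline{\alpha}_t}\,\mathbf{x}_0(u,v)+\sqrt{1-\overline{\alpha}_t}\,\boldsymbol{\epsilon}(u,v)$, i.e.\ a complex Gaussian with deterministic mean $\mu:=\sqrt{\overline{\alpha}_t}\,\mathbf{x}_0(u,v)$, so that $|\mu|=\sqrt{\overline{\alpha}_t}\,|\mathbf{x}_0(u,v)|$, and per-component noise variance $\tau^2:=(1-\overline{\alpha}_t)/2$. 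Because $|\mathbf{x}_0(u,v)|$ is deterministic, $\mathrm{Var}(\Delta A_t(u,v))=\mathrm{Var}(|\mathbf{x}_t(u,v)|)$, so it suffices to compute the variance of this modulus.

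Next I would rotate coordinates by the factor $e^{-i\phi_{\mathbf{x}_0}(u,v)}$ so that the mean lies on the positive real axis; this leaves the modulus unchanged and, by rotational invariance of the circularly-symmetric noise, leaves the two noise components $n_\parallel,n_\perp$ i.i.d.\ $\mathcal{N}(0,\tau^2)$. Then $|\mathbf{x}_t(u,v)|=\sqrt{(|\mu|+n_\parallel)^2+n_\perp^2}$, and in the high-SNR regime $\tau\ll|\mu|$ a second-order Taylor expansion of the square root yields $|\mathbf{x}_t(u,v)|\approx|\mu|+n_\parallel+\tfrac{n_\perp^2}{2|\mu|}$. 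Taking expectations gives $\mathbb{E}\,|\mathbf{x}_t(u,v)|\approx|\mu|+\tfrac{\tau^2}{2|\mu|}$, whereas the second moment is \emph{exact}, $\mathbb{E}\,|\mathbf{x}_t(u,v)|^2=|\mu|^2+2\tau^2$ (here I use that $\mathrm{Cov}(n_\parallel,n_\perp^2)=0$, an odd Gaussian moment, which is what keeps the computation clean); subtracting the squared mean from the second moment leaves $\mathrm{Var}(|\mathbf{x}_t(u,v)|)\approx\tau^2-\tfrac{\tau^4}{4|\mu|^2}$. Substituting $\tau^2=(1-\overline{\alpha}_t)/2$ and $|\mu|^2=\overline{\alpha}_t|\mathbf{x}_0(u,v)|^2$ then gives the stated expression for $\mathrm{Var}(\Delta A_t(u,v))$.

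For the monotonicity assertion I would put $s:=1-\overline{\alpha}_t$, which increases with $t$ because $\overline{\alpha}_t$ is decreasing, and rewrite the RHS as $\tfrac{s}{2}-\tfrac{s^2}{16(1-s)|\mathbf{x}_0(u,v)|^{2}}$. Differentiating in $s$ gives $\tfrac12-\tfrac{s(2-s)}{16(1-s)^2|\mathbf{x}_0(u,v)|^{2}}$, and in the regime where the expansion is meaningful, namely $s\ll(1-s)|\mathbf{x}_0(u,v)|^{2}$, the linear term $\tfrac{s}{2}$ dominates its $O(s^2)$ correction, so this derivative is positive and the RHS increases with $t$; heuristically, as $t$ grows more noise is injected while the signal amplitude $\sqrt{\overline{\alpha}_t}\,|\mathbf{x}_0(u,v)|$ is attenuated, so the amplitude spectrum is progressively corrupted.

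The hard part will be the truncation step: one must verify that the terms dropped from the Taylor expansion of the modulus are genuinely of higher order in $\tau/|\mu|$ — equivalently, that we are in the Rician high-SNR regime — since for large $t$ the signal is swamped by noise and the approximation degrades; for the same reason the monotonicity claim is only literally valid in that regime, so the write-up should make this regime of validity explicit rather than assert the conclusion unconditionally. Everything else reduces to bookkeeping with Gaussian moments.
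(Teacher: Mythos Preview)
Your proposal is correct and follows essentially the same route as the paper: both apply the DFT linearly to the forward process, identify $|\mathbf{x}_t(u,v)|$ as a Rice variable in the high-SNR regime, approximate its mean by $|\mu|+\tau^2/(2|\mu|)$, and subtract the squared mean from the exact second moment $|\mu|^2+2\tau^2$ to obtain $\tau^2-\tau^4/(4|\mu|^2)$. Your derivation is slightly more self-contained (you Taylor-expand the modulus directly rather than quoting Rice-distribution moments) and you are more careful than the paper in flagging that the monotonicity claim only holds in the high-SNR regime; note also that your computation yields $|\mathbf{x}_0(u,v)|^2$ in the denominator, matching the appendix's ``Modified Edition'' rather than the main-text statement.
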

\begin{definition}
(Difference of phase) Given the phase $\phi_{\mathbf{x}_0}$ of the input image $\mathbf{x}_0$ and the phase $\phi_{\mathbf{x}_t}$ of the noisy image $\mathbf{x}_t$, respectively. We definite the difference between there two phase of the images at arbitrary coordinate $(u,v)$ are as follows:
\begin{equation}
        \Delta\theta_t(u,v)=\phi_{\mathbf{x}_t}(u,v)-\phi_{\mathbf{x}_0}(u,v).
\end{equation}
This value represents the degree of variation in the shape and structure of the image.
\end{definition}
\begin{theorem}
\label{phase-theorem}
    (Proof in Appendix \ref{proof-3-4}) The variance of the first-order approximation of the difference of phase between clean image $\mathbf{x}_0$ and noisy image $\mathbf{x}_t$ at arbitrary coordinates $(u,v)$ at frequency domain is as follows:
    \begin{equation}
         Var(\Delta\theta_t(u,v))=\frac{1}{\sqrt{1-\frac{1}{SNR_t^2(u,v)}}}-1,
    \end{equation}
    where signal to noise ratio (SNR) at time-step $t$ is defined as follows:
    \begin{equation}
        SNR_t(u,v)=\frac{\sqrt{\overline{\alpha}_t}|\mathbf{x}_0(u,v)|}{\sqrt{1-\overline{\alpha}_t}|\mathbf{\epsilon}(u,v)|}.
    \end{equation}

    Obviously, $SNR_t(u,v)$ decreases monotonically with $t$, so the variance  of the difference of phase $Var(\Delta\theta(u,v))$  increases monotonically with $t$. This means that as $t$ increases, the phase spectrum of the original image is increasingly disrupted by noise.
\end{theorem}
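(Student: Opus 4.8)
The plan is to reduce the claim to a one-dimensional trigonometric integral over the phase of the noise coefficient. First I would apply linearity of the DFT to the forward marginal $\mathbf{x}_t=\sqrt{\overline{\alpha}_t}\,\mathbf{x}_0+\sqrt{1-\overline{\alpha}_t}\,\mathbf{\epsilon}$, so that at a fixed coordinate $(u,v)$,
\[
\mathbf{x}_t(u,v)=\sqrt{\overline{\alpha}_t}\,\mathbf{x}_0(u,v)+\sqrt{1-\overline{\alpha}_t}\,\mathbf{\epsilon}(u,v).
\]
Since the phase difference $\Delta\theta_t(u,v)$ is unchanged by a global rotation of the complex plane, I would rotate so that $\phi_{\mathbf{x}_0}(u,v)=0$; then $\Delta\theta_t(u,v)=\arg\bigl(\sqrt{\overline{\alpha}_t}\,|\mathbf{x}_0(u,v)|+\sqrt{1-\overline{\alpha}_t}\,\mathbf{\epsilon}(u,v)\bigr)$. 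Decomposing $\mathbf{\epsilon}(u,v)=|\mathbf{\epsilon}(u,v)|e^{i\psi}$ into its components along and orthogonal to the (now real, positive) signal gives the exact identity
\[
\tan\Delta\theta_t(u,v)=\frac{\sqrt{1-\overline{\alpha}_t}\,|\mathbf{\epsilon}(u,v)|\sin\psi}{\sqrt{\overline{\alpha}_t}\,|\mathbf{x}_0(u,v)|+\sqrt{1-\overline{\alpha}_t}\,|\mathbf{\epsilon}(u,v)|\cos\psi}=\frac{\sin\psi}{SNR_t(u,v)+\cos\psi}.
\]

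Next I would take the first-order (small-angle) approximation $\Delta\theta_t\approx\tan\Delta\theta_t$ --- this is what ``first-order approximation of the difference of phase'' refers to in the statement --- and treat $|\mathbf{\epsilon}(u,v)|$ as the deterministic quantity appearing in $SNR_t$ while the phase $\psi$ is uniform on $[0,2\pi)$; this is consistent with $\mathbf{\epsilon}(u,v)$ being a circularly symmetric complex Gaussian (equivalently, it is the variance conditional on $|\mathbf{\epsilon}(u,v)|$). The map $\psi\mapsto-\psi$ gives $\mathbb{E}[\Delta\theta_t]=0$, hence with $s:=SNR_t(u,v)$,
\[
Var(\Delta\theta_t(u,v))=\frac{1}{2\pi}\int_0^{2\pi}\frac{\sin^2\psi}{(s+\cos\psi)^2}\,d\psi.
\]

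The computational core is this integral. Writing $\sin^2\psi=1-\cos^2\psi$ and substituting $y=s+\cos\psi$ gives $\frac{1-\cos^2\psi}{(s+\cos\psi)^2}=\frac{1-s^2}{y^2}+\frac{2s}{y}-1$, so everything reduces to the two classical integrals $\int_0^{2\pi}\frac{d\psi}{s+\cos\psi}=\frac{2\pi}{\sqrt{s^2-1}}$ and $\int_0^{2\pi}\frac{d\psi}{(s+\cos\psi)^2}=\frac{2\pi s}{(s^2-1)^{3/2}}$ (the latter being $-\partial_s$ of the former), both valid for $s>1$. Collecting terms collapses the expression to $\frac{s}{\sqrt{s^2-1}}-1=\frac{1}{\sqrt{1-1/SNR_t^2(u,v)}}-1$. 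Monotonicity then follows because $\overline{\alpha}_t$ is non-increasing in $t$, so $SNR_t(u,v)$ is non-increasing, $1/SNR_t^2(u,v)$ is non-decreasing, and $\bigl(1-1/SNR_t^2(u,v)\bigr)^{-1/2}-1$ is non-decreasing in $t$.

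The main obstacle is not the integral, which is routine, but the two modelling points it rests on: (i) the replacement $\Delta\theta_t\approx\tan\Delta\theta_t$ is legitimate only when the perturbation is small relative to the signal at that frequency, i.e.\ $SNR_t(u,v)>1$, which is exactly the condition that keeps $s+\cos\psi$ from vanishing and the closed form finite --- for $s<1$ the integral diverges, reflecting genuine phase wrap-around; and (ii) the ``fixed modulus / uniform phase'' description of $\mathbf{\epsilon}(u,v)$, i.e.\ that the randomness driving $\Delta\theta_t$ is carried by the noise phase. I would record both as standing hypotheses (small $t$, per-frequency SNR above one) and observe that they are consistent with the DDPM noise model and with the operating regime of diffusion-based purification.
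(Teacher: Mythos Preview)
Your proposal is correct and follows essentially the same route as the paper: factor out the signal phasor (your global rotation is exactly the paper's step $\mathbf{x}_t(u,v)=S_t e^{i\phi_{\mathbf{x}_0}}(1+K_te^{i\phi})$ with $K_t=1/SNR_t$), take the first-order approximation $\Delta\theta_t\approx\tan\Delta\theta_t$, treat the noise phase as uniform on $[0,2\pi)$ with the modulus held fixed, and evaluate the integral $\frac{1}{2\pi}\int_0^{2\pi}\frac{\sin^2\psi}{(s+\cos\psi)^2}\,d\psi$. The only difference is computational: the paper obtains the integral via integration by parts followed by the Weierstrass substitution $t=\tan(\psi/2)$, while you expand $\sin^2\psi=1-\cos^2\psi$ and invoke the standard closed forms $\int_0^{2\pi}\frac{d\psi}{s+\cos\psi}=\frac{2\pi}{\sqrt{s^2-1}}$ and its $-\partial_s$; both yield $\frac{s}{\sqrt{s^2-1}}-1$.
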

\textcolor{black}{\textbf{Remark 3.5.} From Theorem \ref{amplitude-theorem} and Theorem \ref{phase-theorem}, we can conclude that all frequency components of the image are disrupted by the forward process, and the degree of disruption increases monotonically with $t$ ($\overline{\alpha_t}$ decreases monotonically with $t$ ). From Figure \ref{change}, we can find that as $t$ increases, while the adversarial perturbations are drowned out into part of  isotropic noise, the normal content and structural information contained in the amplitude and phase spectra of low frequencies will also be disrupted by the forward process, which means that  the existing method may result in excessive disruption of the input images.}

\section{Methodology}
To minimize the damage to semantic information while eliminating adversarial perturbations, we propose a novel adversarial purification method named FreqPure from the perspective of the frequency domain.
The core of the FreqPure method lies in selectively preserving low-frequency phase and amplitude spectrum information which are less affected by adversarial perturbations. Our approach not only maintains some structural features and content of the original image during the reverse process but also provides valuable prior constraints for the reconstruction of high-frequency information in the image.\\ 
Given an adversarial example \( \mathbf{x}_{adv} \) at time-step \( t=0 \), \textit{i.e.}, \( \mathbf{x}_0 = \mathbf{x}_{adv}\in \mathbb{R}^{H\times W \times C} \). We first diffuse it according to the forward process from \( t=0 \) to \( t=t^* \). Then, during the reverse diffusion process at each time-step, we aim to yield clean intermediate states for refinement. Following \cite{wang2023zeroshot}, to obtain ``clean" samples, we estimate \( \mathbf{x}_0 \) from \( \mathbf{x}_t \) and the predicted noise \( \mathcal{Z}_\theta(\mathbf{x}_t, t) \). We denote the estimated image \( \mathbf{x}_0 \) at time-step \( t \) as \( \mathbf{x}_{0|t} \), which can be formulated as 
\begin{equation}
    \mathbf{x}_{0|t} = \frac{1}{\sqrt{\alpha_t}} \left( \mathbf{x}_t - \mathcal{Z}_\theta(\mathbf{x}_t, t) \sqrt{1 - \bar{\alpha}_t} \right),
\end{equation}
where $ \mathcal{Z}_\theta$ is a neural network used by DDPM\cite{DDPM}  to predict the noise $\epsilon$ for each time-step $t$, i.e., $\epsilon_t= \mathcal{Z}_\theta(\mathbf{x}_t, t)$. Then, we perform DFT on both the input image $\mathbf{x}_0$ and the estimated image $\mathbf{x}_{0|t}$ to decompose them into the
amplitude spectrum and phase spectrum as follows:
\begin{equation}(A_{\mathbf{x}_0}(u,v),P_{\mathbf{x}_0}(u,v))=DFT(\mathbf{x}_0),
\end{equation}
\begin{equation}(A_{\mathbf{x}_{0|t}}(u,v),P_{\mathbf{x}_{0|t}}(u,v))=DFT(\mathbf{x}_{0|t}),
\end{equation}
where $A_{\mathbf{x}_0}(u,v),A_{\mathbf{x}_{0|t}}(u,v)$,$P_{\mathbf{x}_0}(u,v)$ and $P_{\mathbf{x}_{0|t}}(u,v)$ represent the amplitude spectrum and phase spectrum of input image $\mathbf{x}_0$ and estimated image $\mathbf{x}_{0|t}$, respectively. $(u,v)$ denotes the coordinate at the frequency domain. The amplitude spectrum reflects the energy distribution of various frequency components in the image. The phase spectrum contains the structural and shape information of the image.

\subsection{Amplitude Spectrum Exchange}
Experimental reveals that low-frequency amplitude spectrum components demonstrate significant robustness to adversarial perturbations, being almost unaffected by such disturbances. Additionally, natural signals (such as images) generally exhibit low-pass characteristics, meaning that the low-frequency power spectrum components are relatively large. Therefore, we opt to retain this portion of the amplitude spectrum and replace the low-frequency amplitude spectrum components of the estimated image $\mathbf{x}_{0|t}$ accordingly at each time-step. We first construct a filter $H_A(u,v)$ for amplitude spectrum as follows:
\begin{equation}
\label{filter}
    H_A(u,v)=\left\{
    \begin{array}{cc}
    1,     & D(u,v)<D_A \\
    0,     & D(u,v)>D_A
    \end{array}
\right.,
\end{equation}
where $D_A$ is hyper-parameter, $D(u,v)$ represents the distance from point $(u,v)$ to the center of the $H\times W$ frequency rectangle in the frequency domain, which is also the magnitude of the frequency. The formula is as follows:
\begin{equation}
    D(u,v)=[(u-H/2)^2+(v-W/2)^2]^{\frac{1}{2}},
\end{equation}
With the filter $H_A$ defined above, we can replace the low-frequency components of the estimated image's amplitude spectrum with the low-frequency components of the input sample's amplitude spectrum for each channel (Color images are typically composed of three channels: RGB) as follows:
\begin{equation}
    \hat{A}_{\mathbf{x}_{0|t}} = A_{\mathbf{x}_{0|t}}\times(1-H_A)+A_{\mathbf{x}_0}\times H_A,
\end{equation}
$\hat{A}_{\mathbf{x}_{0|t}}$ represents the updated amplitude spectrum of the estimated image $\mathbf{x}_{0|t}$.

\subsection{Phase Spectrum Projection}
Different from the amplitude spectrum, the phase spectrum is affected by adversarial perturbations at all frequency components. Directly retaining the low-frequency phase spectrum, which is less disturbed, will preserve the adversarial perturbations while also affecting the restoration of the high-frequency phase spectrum. Therefore, we choose to project the estimated image's low-frequency phase spectrum into a certain range of the input image's low-frequency phase spectrum. First, We construct a filter $H_P$ for phase spectrum which is the same as \eqref{filter} but with a different hyper-parameter $D_P$. Then, we update the phase spectrum of the sampled result as follows:
\begin{equation}
    \hat{P}_{\mathbf{x}_{0|t}}=\Pi_{P_L+\delta}(\underbrace{P_{\mathbf{x}_0}\times H_P}_{P_L})+P_{\mathbf{x}_{0|t}}\times(1-H_P),
\end{equation}
where $\Pi$ is the projection operation. $P_L$ is the low-frequency phase spectrum and $\delta$  denotes the range within which we allow variations in the low-frequency phase spectrum of the estimated image $\mathbf{x}_{0|t}$.\\
This strategy can benefit from two aspects: 1) We can extract coarse-grained low-frequency structural information while allowing it to vary within a certain range, enabling it to gradually align with the low-frequency structural information of natural images. 2) The coarse-grained low-frequency structural information can provide prior guidance for the recovery of high-frequency information.\\
Overall, we aim to extract and retain information from the original clean image by focusing on frequencies that are less affected by adversarial perturbations, thereby maximizing the preservation of the original image's structure and content, while also providing a correct prior for the restoration of high-frequency information.
\subsection{Next State Generation}
With the updated amplitude spectrum $\hat{A}_{\mathbf{x}_{0|t}}$ and phase spectrum $\hat{P}_{\mathbf{x}_{0|t}}$. We combine $\hat{A}_{\mathbf{x}_{0|t}}$ and $\hat{P}_{\mathbf{x}_{0|t}}$ obtain their representation in the time domain through the inverse discrete Fourier transformation ($iDFT$) as follows:
\begin{equation}
    \hat{\mathbf{x}}_{0|t} = iDFT(\hat{A}_{\mathbf{x}_{0|t}},\hat{P}_{\mathbf{x}_{0|t}}).
\end{equation}
As suggested in \cite{wang2023zeroshot}, the next state $\mathbf{x}_{t-1}$ can be sampled from a joint distribution, which is formulated as:
\begin{equation}  
p_\theta (\mathbf{x}_{t-1} | \mathbf{x}_t, \hat{\mathbf{x}}_{0|t}) = \mathcal{N}(\mu_t(\mathbf{x}_t, \hat{\mathbf{x}}_{0|t}); \sigma_t^2 I) ,
\end{equation}  
where  $\mu_t(\mathbf{x}_t, \hat{\mathbf{x}}_{0|t}) = \frac{\sqrt{\alpha_{t-1} \beta_t}}{1 - \alpha_t} \hat{\mathbf{x}}_{0|t} + \frac{\sqrt{\alpha_t(1 - \bar{\alpha}_{t-1})}}{1 - \alpha_t} \mathbf{x}_t  $
and  $ \sigma_t^2 = \frac{1 - \bar{\alpha}_{t-1}}{1 - \alpha_t} \beta_t.  $ By using the low-frequency priors of the phase spectrum and amplitude spectrum provided by the input image to guide the sampling process in each time-step $t$, we ultimately obtain the clean image \(\hat{x}_0\) with natural amplitude spectrum and phase spectrum. The complete algorithm process can refer to \ref{algorithm}.
\begin{algorithm}  
\label{algorithm}
\caption{Sampling Process}  
\begin{algorithmic}[1]  
\REQUIRE Sample $\mathbf{x}_{adv}$, timestep $t^*$.
    \STATE $\mathbf{x}_t \sim \mathcal{N}(0, I)$  
    \FOR{$t = t^*, \ldots, 1$}  
        \STATE $\mathbf{x}_{0|t} = \frac{1}{\sqrt{\alpha_t}} \left( \mathbf{x}_t - \mathcal{Z}_\theta(\mathbf{x}_t, t) \sqrt{1 - \bar{\alpha}_t} \right)$  
        \STATE $(A_{\mathbf{x}_{0|t}}, P_{\mathbf{x}_{0|t}}) = DFT(\mathbf{x}_{0|t}) $
        \STATE $(A_{\mathbf{x}_{adv}}, P_{\mathbf{x}_{adv}}) = DFT(\mathbf{x}_{adv}) $
        \STATE \textcolor{gray}{// Amplitude Spectrum Exchange}
        \STATE $    \hat{A}_{\mathbf{x}_{0|t}} = A_{\mathbf{x}_{0|t}}\times(1-H_A)+A_{\mathbf{x}_0}\times H_A$
        \STATE \textcolor{gray}{// Phase Spectrum Projection}
        \STATE $ \hat{P}_{\mathbf{x}_{0|t}}=\Pi_{P_L+\delta}({P_L})+P_{\mathbf{x}_{0|t}}\times(1-H_P)$
        \STATE \textcolor{gray}{// Next State Generation}
        \STATE $\hat{\mathbf{x}}_{0|t}=iDFT(\hat{A}_{\mathbf{x}_{0|t}},\hat{P}_{\mathbf{x}_{0|t}})$
        \STATE $\mathbf{x}_{t-1} \sim p(\mathbf{x}_{t-1} | \mathbf{x}_t, \hat{\mathbf{x}}_{0|t})$  

    \ENDFOR  
    \STATE \textbf{Output} $\mathbf{x}_0$  
\end{algorithmic}  
\end{algorithm}  
\begin{figure*}[!t]
    \centering
    \includegraphics[width=0.9\textwidth]{./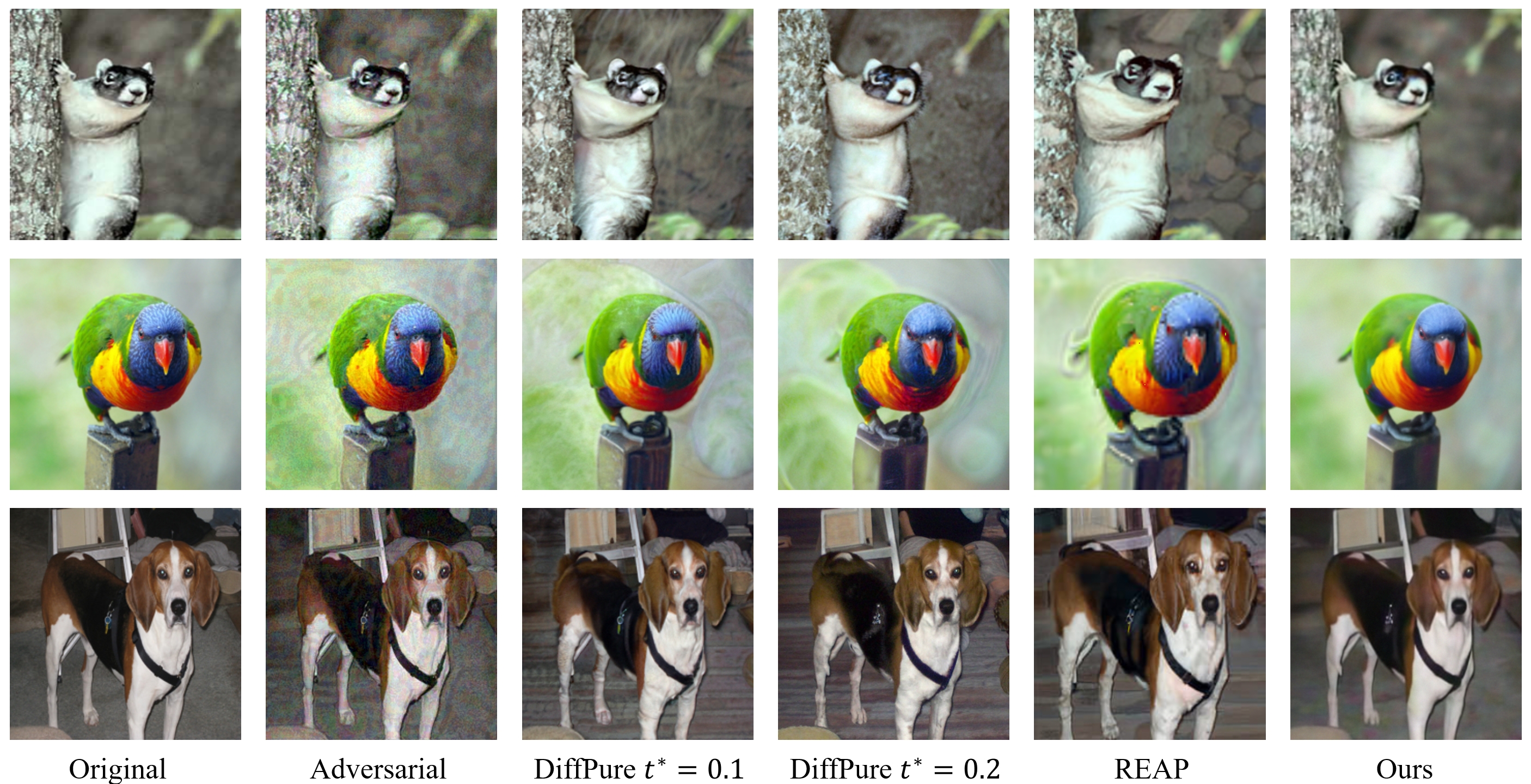}
    \caption{Visualization of origianl clean images , adversarial images and purified images. The images purified by our method are most similar to the origianl clean images.}
    \label{Visualization-main}
\end{figure*} 
\section{Experiment}
\subsection{Experimental Settings}
\textbf{Datasets and network architectures.}\quad Three datasets are utilized for evaluation: CIFAR-10 \cite{cifar10}, SVHN and ImageNet \cite{imagenet}. Our results are compared against several prominent defense techniques listed in the standardized benchmark RobustBench \cite{robustbench} for both CIFAR-10 and ImageNet, and we also examine various adversarial purification methods. For CIFAR-10, we employ two widely used classifier architectures: WideResNet-28-10 and WideResNet-70-16 \cite{wide}. In the case of SVHN, WideResNet-28-10 acts as the backbone, whereas ResNet-50 \cite{resnet} is utilized for ImageNet. \\
\textbf{Adversarial attack methods.}\quad We evaluate strong adaptive attacks \cite{sa1,sa2} against our approach and other adversarial purification methods.  The well-known AutoAttack \cite{autoattack} is implemented under $\ell_\infty$ and $\ell_2$ threat models.  Furthermore, the projected gradient descent (PGD) attack \cite{PGD} is assessed on our method, as suggested in \cite{robust-evaluation}. To account for the randomness introduced by the diffusion and denoising processes, Expectation Over Transformation (EOT) \cite{sa1} is adapted for these adaptive attacks. Additionally, we employ the BPDA+EOT \cite{hill2021stochastic} attack to facilitate a fair comparison with other adversarial purification methods. Lastly, following the recommendations of \cite{robust-evaluation}, a surrogate process is utilized to derive the gradient of the reverse process for white-box attacks.\\
\textbf{Pre-trained Models.}\quad We utilize the unconditional CIFAR-10 checkpoint of EDM supplied by NVIDIA \cite{chekpoint-cifar10} for the CIFAR-10 dataset. For the ImageNet experiments, we adopt the 256x256 unconditional diffusion checkpoint from the guided-diffusion library. The pre-trained classifier for CIFAR-10 is obtained from RobustBench \cite{robustbench}, whereas the classifier weights for ImageNet are sourced from the TorchVision library.   \\
\textbf{Evaluation metrics.} \quad To evaluate the effectiveness of defense methods, we employ two metrics: standard accuracy, which is calculated on clean images, and robust accuracy, assessed on adversarial examples. Given the significant computational expense associated with evaluating models against adaptive attacks, we randomly sample a fixed subset of 512 images from the test set for robust evaluation, consistent with \cite{diffpure,robust-evaluation,mimic,ATOP,Contrastive}. In all experiments, we report the mean and standard deviation across three runs to evaluate both standard and robust accuracy.\\
\textbf{Implementation details.} \quad We adhere to the configurations outlined in \cite{robust-evaluation}. Diffusion-based purification methods are evaluated using the PGD attack with 200 update iterations, while BPDA and AutoAttack are assessed with 100 update iterations, except for ImageNet, which utilizes 20 iterations. The number of EOT is set to 20, and the step size is 0.007. For randomized defenses, such as those in \cite{diffpure,robust-evaluation,mimic,Contrastive}, we employ the random version of AutoAttack, whereas the standard version is used for static defenses.

\subsection{Experimental Results}
\begin{table}[ht]
    \setlength\tabcolsep{4pt}
  \centering
  \caption{Standard and robust accuracy of different Adversarial Training (AT) and Adversarial Purification (AP) methods against PGD and AutoAttack $\ell_\infty (\mathbf{\epsilon} = 8/255)$ on CIFAR-10. $\textbf{}^*$ utilizes half number of iterations for the attack due to the high computational cost. $\textbf{}^{\dag}$ indicates the requirement of extra data. The result with an underline indicates the second highest.}
    \label{linf-cifar10-28-10}
\scriptsize
    \begin{tabular}{ccccc}
    \toprule
    \multirow{2}[0]{*}{ Type } & \multirow{2}[0]{*}{Method}  & \multirow{2}[0]{*}{Standard Acc.} & \multicolumn{2}{c}{Robust Acc.} \\
          &       &                & PGD   & AutoAttack \\
    \midrule
    \rowcolor{gray!20} \multicolumn{5}{c}{WideResNet-28-10}\\
    \midrule
        & \cite{improving}    & 88.54 & 65.93 & 63.38\\
    AT    &\cite{uncovering}$^{\dag}$   & 87.51 & 66.01 & 62.76\\
        &\cite{Robustness}
    & 88.62 & 64.95 &61.04\\
    \midrule
        &\cite{score-based} & \cellcolor[rgb]{ .827,  .898,  .961}85.66±0.51 & \cellcolor[rgb]{ .867,  .922,  .969}33.48±0.86 & \cellcolor[rgb]{ .863,  .922,  .969}59.53±0.87 \\
        & \cite{diffpure} & \cellcolor[rgb]{ .698,  .82,  .925}90.07±0.97 & \cellcolor[rgb]{ .635,  .78,  .91}\underline{56.84±0.59} & \cellcolor[rgb]{ .816,  .89,  .957}63.60±0.81 \\
        &\cite{robust-evaluation} & \cellcolor[rgb]{ .698,  .816,  .925}90.16±0.64 & \cellcolor[rgb]{ .647,  .784,  .914}55.82±0.59 & \cellcolor[rgb]{ .729,  .839,  .933}70.47±1.53 \\
    AP    &\cite{Contrastive} & \cellcolor[rgb]{ .659,  .796,  .918}\underline{91.41} & \cellcolor[rgb]{ .71,  .827,  .929}49.22$^{\ast}$ & \cellcolor[rgb]{ .651,  .788,  .914}\underline{77.08} \\
        &\cite{lorid}  & \cellcolor[rgb]{ .867,  .922,  .969}84.20 &   -    & \cellcolor[rgb]{ .867,  .922,  .969}59.14 \\
        &\cite{ATOP} & \cellcolor[rgb]{ .682,  .808,  .922}90.62 &   -    & \cellcolor[rgb]{ .702,  .82,  .929}72.85 \\
        & Ours  & \cellcolor[rgb]{ .608,  .761,  .902}\textbf{92.19 ±0.33} & \cellcolor[rgb]{ .608,  .761,  .902}\textbf{59.39±0.79} & \cellcolor[rgb]{ .608,  .761,  .902}\textbf{77.35±2.14} \\

    \bottomrule
    \end{tabular}%
\end{table}%
\begin{table}[ht]
    \setlength\tabcolsep{4pt}
  \centering
  \caption{Standard and robust accuracy of different Adversarial Training (AT) and Adversarial Purification (AP) methods against PGD and AutoAttack $\ell_\infty (\mathbf{\epsilon} = 8/255)$ on CIFAR-10.  $\textbf{}^*$ The number of iterations for the attack is half that of the other methods for less computational overhead. $\textbf{}^{\dag}$ indicates the requirement of extra data. The result with an underline indicates the second highest.}
    \label{linf-cifar10-70-16}
\scriptsize
    \begin{tabular}{ccccc}

    \toprule
    \multirow{2}[0]{*}{Type } & \multirow{2}[0]{*}{Method}  & \multirow{2}[0]{*}{Standard Acc.} & \multicolumn{2}{c}{Robust Acc.} \\
          &       &                & PGD   & AutoAttack \\
    \midrule
     \rowcolor{gray!20} \multicolumn{5}{c}{WideResNet-70-16}\\
    \midrule
        &\cite{rebuffi2021fixing}$^{\dag}$   & 92.22 & 69.97 &66.56\\
    AT    &\cite{uncovering}$^{\dag}$    & 91.10 & 68.66 &66.10\\
        &\cite{improving}   & 88.75 & 69.03 &65.87\\
    \midrule

        &\cite{score-based} & \cellcolor[rgb]{ .804,  .882,  .953}86.76±1.15 & \cellcolor[rgb]{ .867,  .922,  .969}37.11±1.35 & \cellcolor[rgb]{ .851,  .914,  .965}60.86±0.56 \\
        &\cite{diffpure} & \cellcolor[rgb]{ .698,  .816,  .925}90.43±0.60 & \cellcolor[rgb]{ .71,  .824,  .929}51.13±0.87 & \cellcolor[rgb]{ .8,  .882,  .953}66.06±1.17 \\
        &\cite{robust-evaluation} & \cellcolor[rgb]{ .694,  .816,  .925}90.53±0.1 & \cellcolor[rgb]{ .643,  .784,  .914}\underline{56.88±1.06} & \cellcolor[rgb]{ .757,  .855,  .941}70.31±0.62 \\
        & \cite{Contrastive} & \cellcolor[rgb]{ .62,  .769,  .906}\textbf{92.97} & \cellcolor[rgb]{ .733,  .839,  .937}48.83$^{\ast}$ & \cellcolor[rgb]{ .667,  .8,  .918}\textbf{79.10} \\
    AP    &\cite{lorid} & \cellcolor[rgb]{ .867,  .922,  .969}84.60 &   -    & \cellcolor[rgb]{ .796,  .878,  .953}66.40 \\
        &\cite{lorid} & \cellcolor[rgb]{ .8,  .882,  .953}86.90 &   -    & \cellcolor[rgb]{ .867,  .922,  .969}59.20 \\
        &\cite{ATOP} & \cellcolor[rgb]{ .651,  .788,  .914}91.99 &    -   & \cellcolor[rgb]{ .694,  .816,  .925}76.37 \\
        & Ours  & \cellcolor[rgb]{ .608,  .761,  .902}\underline{92.52±0.53} & \cellcolor[rgb]{ .608,  .761,  .902}\textbf{62.50±2.73} & \cellcolor[rgb]{ .608,  .761,  .902}\underline{78.13±1.95} \\

    \bottomrule
    \end{tabular}%
\end{table}%
\begin{table}[ht]
\scriptsize
    \setlength\tabcolsep{4pt}
  \centering
  \caption{Standard and robust accuracy against PGD and AutoAttack $\ell_2 (\mathbf{\epsilon} = 0.5)$ on CIFAR-10. Adversarial Training (AT) and Adversarial Purification (AP) methods are evaluated. $\textbf{}^\ast$The number of iterations for the attack is half that of the other methods for less computational overhead. $\textbf{}^{\dag}$ indicates the requirement of extra data. $\textbf{}^\ddag$ adopts  WideResNet-34-10 as the backbone, with the same width but more layers than the default one. The result with an underline indicates the second highest.}
\label{l2-cifar10-28-10}  
    \begin{tabular}{ccccc}
    \toprule
    \multirow{2}[0]{*}{Type } & \multirow{2}[0]{*}{Method}  & \multirow{2}[0]{*}{Standard Acc.} & \multicolumn{2}{c}{Robust Acc.} \\
          &          & & PGD   & AutoAttack \\
    \midrule
    \rowcolor{gray!20} \multicolumn{5}{c}{WideResNet-28-10}\\
    \midrule
        &\cite{rebuffi2021fixing}$^{\dag}$ & 91.79 & 85.05 &78.80 \\
    AT    &\cite{augustin2020adversarial}$^{\ddag}$ & 93.96 & 86.14&78.79 \\
        &\cite{Robustness}$^{\ddag}$ 
    & 90.93 & 83.75&77.24 \\
    \midrule
        &\cite{score-based} & \cellcolor[rgb]{ .827,  .898,  .961}85.66±0.51 & \cellcolor[rgb]{ .867,  .922,  .969}73.32±0.76 & \cellcolor[rgb]{ .776,  .867,  .945}79.57±0.38 \\
        &\cite{diffpure} & \cellcolor[rgb]{ .675,  .804,  .922}91.41±1.00 & \cellcolor[rgb]{ .753,  .851,  .941}79.45±1.16 & \cellcolor[rgb]{ .741,  .847,  .937}81.7±0.84 \\
        &\cite{robust-evaluation}  & \cellcolor[rgb]{ .706,  .824,  .929}90.16±0.64 & \cellcolor[rgb]{ .675,  .804,  .922}83.59±0.88 & \cellcolor[rgb]{ .667,  .8,  .918}\underline{86.48±0.38} \\
        &\cite{Contrastive}  & \cellcolor[rgb]{ .627,  .773,  .91}\underline{91.41}  & \cellcolor[rgb]{ .627,  .773,  .91}\underline{86.13}$^{\ast}$ & \cellcolor[rgb]{ .753,  .851,  .941}80.92 \\
    AP    & \cite{lorid} & \cellcolor[rgb]{ .863,  .922,  .969}84.40 &   -    & \cellcolor[rgb]{ .8,  .882,  .953}77.90 \\
        &\cite{lorid} & \cellcolor[rgb]{ .867,  .922,  .969}84.20 &  -     & \cellcolor[rgb]{ .867,  .922,  .969}73.60 \\
        &\cite{ATOP} & \cellcolor[rgb]{ .694,  .816,  .925}90.62 &   -    & \cellcolor[rgb]{ .761,  .859,  .941}80.47 \\
       & Ours  & \cellcolor[rgb]{ .608,  .761,  .902}\textbf{92.19 ±0.33 } & \cellcolor[rgb]{ .608,  .761,  .902}\textbf{87.89±1.17} & \cellcolor[rgb]{ .608,  .761,  .902}\textbf{89.06±0.43} \\
    \bottomrule
    \end{tabular}%
\end{table}%
\begin{table}[ht]
\scriptsize
\setlength\tabcolsep{4pt}
  \centering
  \caption{Standard and robust accuracy against PGD and AutoAttack $\ell_2 (\mathbf{\epsilon} = 0.5)$ on CIFAR-10. Adversarial Training (AT) and Adversarial Purification (AP) methods are evaluated. ($\textbf{}^\ast$The number of iterations for the attack is half that of other methods for less computational overhead. $\textbf{}^{\dag}$ methods need extra data.) The result with an underline indicates the second highest.}
\label{l2-cifar10-70-16}  
    \begin{tabular}{ccccc}
    \toprule
    \multirow{2}[0]{*}{Type } & \multirow{2}[0]{*}{Method}  & \multirow{2}[0]{*}{Standard Acc.} & \multicolumn{2}{c}{Robust Acc.} \\
          &          & & PGD   & AutoAttack \\

    \midrule
    \rowcolor{gray!20} \multicolumn{5}{c}{WideResNet-70-16}\\
    \midrule
            &\cite{rebuffi2021fixing}$^{\dag}$ & 95.74 & 89.62&82.32 \\
    AT    &\cite{uncovering}$^{\dag}$  & 94.74 & 88.18 & 80.53\\
        &\cite{rebuffi2021fixing} & 92.41 & 86.24 & 80.42\\
    \midrule
        & \cite{score-based} & \cellcolor[rgb]{ .867,  .922,  .969}86.76±1.15 & \cellcolor[rgb]{ .867,  .922,  .969}75.666±1.29 & \cellcolor[rgb]{ .867,  .922,  .969}80.43±0.42 \\
        &\cite{diffpure} & \cellcolor[rgb]{ .659,  .792,  .918}92.15±0.72 & \cellcolor[rgb]{ .71,  .827,  .929}82.97±1.38 & \cellcolor[rgb]{ .808,  .886,  .957}83.06±1.27 \\
        &\cite{robust-evaluation}  & \cellcolor[rgb]{ .722,  .831,  .933}90.53±0.14 & \cellcolor[rgb]{ .694,  .816,  .925}83.75±0.99 & \cellcolor[rgb]{ .749,  .851,  .941}\underline{85.59±0.61} \\
    AP    & \cite{Contrastive}  & \cellcolor[rgb]{ .624,  .773,  .91}92.97 & \cellcolor[rgb]{ .682,  .808,  .922}\underline{84.37}$^{\ast}$ & \cellcolor[rgb]{ .808,  .886,  .957}83.01 \\
        &\cite{ATOP} & \cellcolor[rgb]{ .663,  .796,  .918}91.99 &    -   & \cellcolor[rgb]{ .847,  .91,  .965}81.35 \\
        & Ours  & \cellcolor[rgb]{ .608,  .761,  .902}\underline{92.52±0.53} & \cellcolor[rgb]{ .608,  .761,  .902}\textbf{87.89±1.17} & \cellcolor[rgb]{ .608,  .761,  .902}\textbf{89.84±1.56} \\
    \bottomrule
    \end{tabular}%
\end{table}%

\begin{table}[ht]
\scriptsize
\setlength\tabcolsep{4pt} 
  \centering
  \caption{Standard and robust accuracy against BPDA+EOT $\ell_\infty(\mathbf{\epsilon} = 8/255)$ on CIFAR-10. The result with an underline indicates the second highest.}
  \label{bpda}
    \begin{tabular}{ccccr}
    \toprule

    \multicolumn{2}{c}{\multirow{2}[0]{*}{Method}} & \multirow{2}[0]{*}{Purification} & \multicolumn{2}{c}{Accuracy} \\
    \multicolumn{2}{c}{} &       & Standard & \multicolumn{1}{c}{Robust} \\
    \midrule
     \rowcolor{gray!20} \multicolumn{5}{c}{WideResNet-28-10}\\
    \midrule
    \multicolumn{2}{c}{\cite{pixeldefend}} & Gibbs Update & 95.00    & \multicolumn{1}{c}{9.00} \\
    \multicolumn{2}{c}{\cite{menet}} & Mask+Recon & 94.00    & \multicolumn{1}{c}{15.00} \\
    \multicolumn{2}{c}{ \cite{hill2021stochastic}} & EBM+LD & 84.12 & \multicolumn{1}{c}{54.90} \\
    \midrule
    \multicolumn{2}{c}{\cite{score-based}} & DSM+LD &\cellcolor[rgb]{ .867,  .922,  .969} 85.66±0.51 & \cellcolor[rgb]{ .867,  .922,  .969}66.91±1.75\\
    \multicolumn{2}{c}{\cite{diffpure}} & Diffusion &\cellcolor[rgb]{ .714,  .827,  .929} 90.07±0.97 &  \cellcolor[rgb]{ .643,  .784,  .914}81.45±1.51 \\
    \multicolumn{2}{c}{ \cite{Contrastive}} & Diffusion &\cellcolor[rgb]{ .667,  .8,  .918} \underline{91.37±1.21} &  \cellcolor[rgb]{ .643,  .784,  .914}85.51±0.81 \\
    \multicolumn{2}{c}{\cite{GDMAP}} & Diffusion &\cellcolor[rgb]{ .718,  .831,  .933} 89.96±0.40 &\cellcolor[rgb]{ .765,  .859,  .945} 75.59±1.26 \\
    \multicolumn{2}{c}{ \cite{mimic}} & Diffusion &\cellcolor[rgb]{ .722,  .831,  .933} 89.88±0.35 &\cellcolor[rgb]{ .631,  .776,  .91} \underline{88.43±0.83} \\
    \multicolumn{2}{c}{\cite{robust-evaluation}} & Diffusion &\cellcolor[rgb]{ .71,  .827,  .929} 90.16±0.64 & \cellcolor[rgb]{ .612,  .765,  .906} 88.40±0.88 \\
    \multicolumn{2}{c}{Ours} & Diffusion & \cellcolor[rgb]{ .608,  .761,  .902}\textbf{ 92.19±0.79}&\cellcolor[rgb]{ .608,  .761,  .902}\textbf{ 89.07±0.79 } \\
    \bottomrule
    \end{tabular}%
\end{table}%

\begin{table}[ht]
\scriptsize
\setlength\tabcolsep{4pt}
  \centering
  \caption{Standard and robust accuracy against PGD $\ell_\infty (\mathbf{\epsilon} = 4/255)$ on ImageNet. ResNet-50 is used as the classifier. The result with an underline indicates the second highest.}
    \label{linf-imagenet}
    \begin{tabular}{cccc}
    \toprule
    \multirow{2}[0]{*}{Type} & \multirow{2}[0]{*}{Method} & \multicolumn{2}{c}{Accuracy} \\
          &       & Standard & Robust \\
    \midrule
    \rowcolor{gray!20} \multicolumn{4}{c}{ResNet-50}\\
    \midrule
        & \cite{transferbetter} & 63.86 & 39.11 \\
    AT    &  \cite{engstrom2019robustness}    & 62.42 & 33.20 \\
        & Wong et al. \cite{fastbetter} & 53.83 & 28.04 \\
    \midrule
        & ($t^*=0.2$)  \cite{diffpure} & \cellcolor[rgb]{ .62,  .769,  .906}\underline{73.96} & \cellcolor[rgb]{ .867,  .922,  .969}40.63 \\
        &  ($t^*=0.3$)  \cite{diffpure} & \cellcolor[rgb]{ .643,  .784,  .914}72.85 & \cellcolor[rgb]{ .769,  .859,  .945}48.24 \\
        &  ($t^*=0.4$)  \cite{diffpure} & \cellcolor[rgb]{ .867,  .922,  .969}61.71 & \cellcolor[rgb]{ .855,  .914,  .969}41.67 \\
        & \cite{Contrastive}  & \cellcolor[rgb]{ .694,  .816,  .925}70.41 & \cellcolor[rgb]{ .855,  .914,  .969}41.70 \\
    AP    & \cite{robust-evaluation} & \cellcolor[rgb]{ .671,  .8,  .922}71.42 & \cellcolor[rgb]{ .788,  .875,  .949}46.59 \\
        & \cite{mimic} & \cellcolor[rgb]{ .859,  .918,  .969}62.25 & \cellcolor[rgb]{ .729,  .835,  .933}51.14 \\
        &  \cite{lorid} & \cellcolor[rgb]{ .62,  .769,  .906}\textbf{73.98} & \cellcolor[rgb]{ .655,  .792,  .918}\underline{56.54} \\
        & Ours  & \cellcolor[rgb]{ .608,  .761,  .902}71.88 & \cellcolor[rgb]{ .608,  .761,  .902}\textbf{59.77}  \\
    \bottomrule
    \end{tabular}%
\end{table}%
\begin{figure}[ht]
    \centering
    \includegraphics[width=0.45\textwidth]{./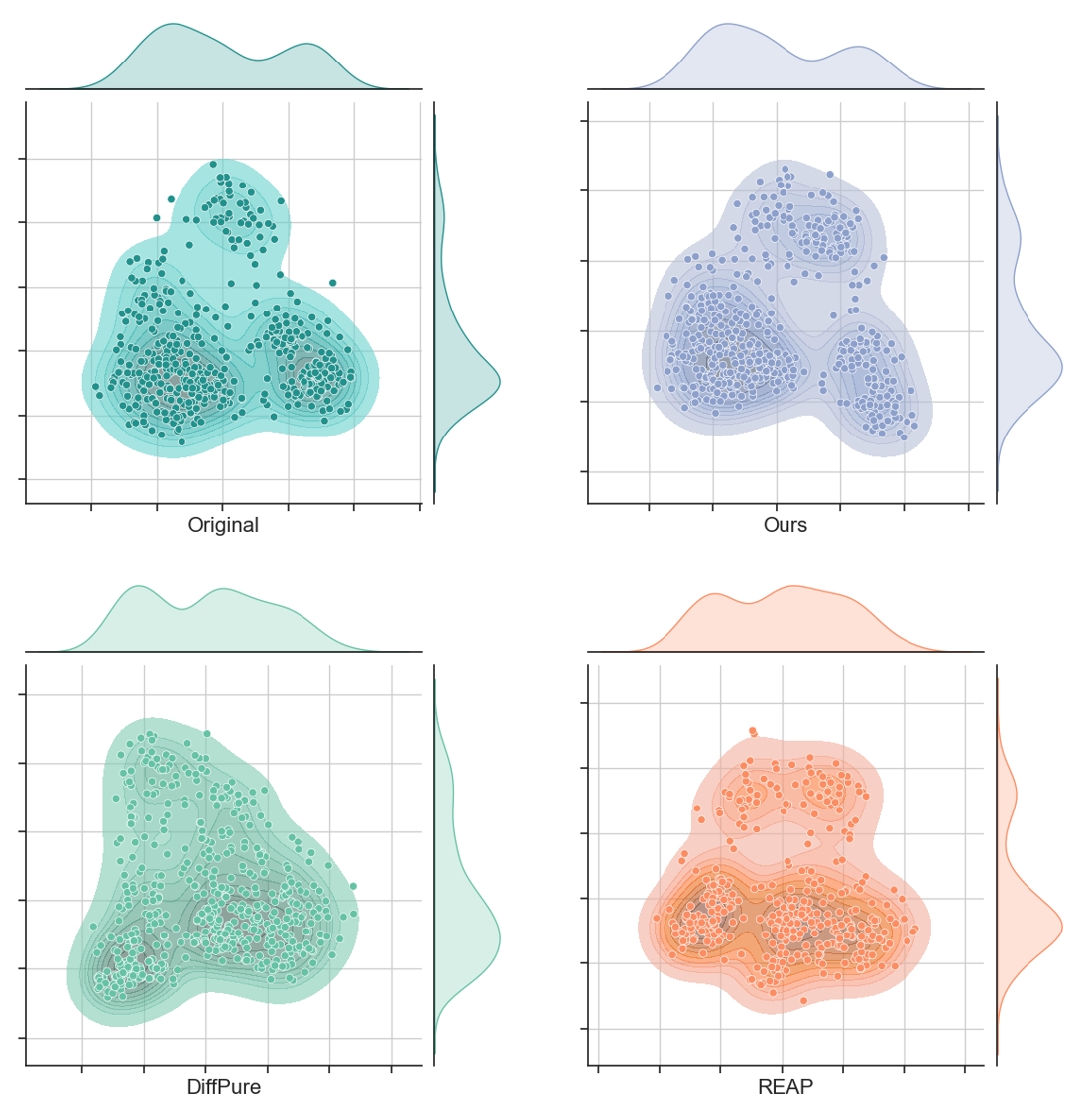}
    \caption{Joint distribution of the original images and purified images. The distributions of the purified images by our method and the original images are the most similar.}
    \label{distribution}
\end{figure}
\begin{table}[htbp]
  \centering
  \small
      \setlength\tabcolsep{2pt}
  \caption{To measure the similarity between the purified image and the clean image, we calculated the DINO similarity and CLIP similarity between them.}
    \begin{tabular}{cccccc}
    \toprule
          &       & DiffPure & REAP  & CGAP  & Ours \\
    \midrule
    \multirow{2}[0]{*}{DINO} & Standard & 0.887 & 0.860  &  0.879     & \textbf{0.917} \\
          & Robust & 0.820  & 0.805 &   0.822    & \textbf{0.877} \\
    \midrule
    \multirow{2}[0]{*}{CLIP} & Standard &  0.961     &  0.952     &  0.962     & \textbf{0.972} \\
          & Robust & 0.949      &  0.946     &  0.954     &\textbf{0.956} \\
    \bottomrule
    \end{tabular}%
  \label{similarity}%
\end{table}%
\textbf{CIFAR10}\quad We perform extensive experiments on the CIFAR-10 dataset, comparing our method with other approaches using two model architectures: WideResNet-28-10 and WideResNet-70-16. Robust accuracy is evaluated under three types of attacks: BPDA+EOT, PGD, and AutoAttack. For \cite{improving,uncovering,Robustness}, they typically construct adversarial examples based on other attack methods for adversarial training. Focusing on the $\ell_{\infty}$ attack, as illustrated in Table \ref{linf-cifar10-28-10}, our method demonstrates a significant advantage over other baselines when using WideResNet-28-10 as the backbone. Our method not only improves the standard accuracy metric by 0.78\% but also enhances robust accuracy under PGD and AutoAttack attacks by 2.55\% and 0.27\%, respectively. When WideResNet-70-16 is used as the backbone, from table \ref{linf-cifar10-70-16} we can see that though our method decreases the standard accuracy metric by 0.45\%, our method enhances robust accuracy under PGD and AutoAttack attacks by 5.62\% and 1.76\%, respectively. We also assess the accuracy of our method and other baselines against the $\ell_2$ attack. From Table \ref{l2-cifar10-28-10}, we observe that,  When WideResNet-28-10 is used as the backbone, Our method improves standard accuracy by 0.78\%, and robust accuracy increases by 1.86\% and 2.68\%, respectively. As shown in Table \ref{l2-cifar10-70-16} when WideResNet-70-16 is the backbone, our method outperforms other baselines, though decreasing by 0.45\% in standard accuracy and by 3.62\% and 4.25\% in robust accuracy under PGD and AutoAttack, respectively. Additionally, we apply the BPDA+EOT attack, which approximates differentiability. As shown in Table \ref{bpda}, Our method leads by 0.82\% in Standard Accuracy and 0.64\% in Robust Accuracy. To measure the degree of similarity between the purified samples and the original clean samples, We plot the distribution map of the purified images. As shown in Figure \ref{distribution}, our method yields a distribution of purified images that is the most similar to the original images when compared to other methods. In addition to qualitative analysis, we also conduct some quantitative analyses. Here, We opt to utilize measures like DINO similarity \cite{dino,dinov2} and CLIP similarity \cite{clip}, which calculate the cosine similarity between embeddings extracted from two images. From Table \ref{similarity}, Our method achieves the highest similarity score, indicating that the images purified by our method are the most akin to the original clean images. This is attributed to our strategy of appropriately preserving the inherent content and structural information of the images during the purification process.
Overall, our method outperforms others, demonstrating its effectiveness in preserving semantic information while eliminating adversarial perturbations.\\
\textbf{ImageNet}\quad We evaluate ResNet50 as the backbone of the ImageNet dataset under PGD attacks, consistent with \cite{diffpure,robust-evaluation,Contrastive}. From Table \ref{linf-imagenet}, our method outperforms competitors by 0.53\% in standard accuracy and 3.52\% in robust accuracy, significantly exceeding the baseline. Additionally, the method from \cite{diffpure} achieves a standard accuracy of 73.96\% and a robust accuracy of 40.63\% when $t^* = 0.2$. For $t^* = 0.3$,  standard accuracy decreases to 72.85\%, while robust accuracy increases to 48.24\%. However, at $t^* = 0.4$, both standard accuracy and robust accuracy decline to 61.71\% and 41.67\%, respectively. This demonstrates that traditional methods increasingly destroy semantic information as $t$ increases, which aligns with our theoretical proof. In addition, we also visualize the purified images. From Figure \ref{Visualization-main}, we observe that compared to DiffPure and REAP, our method achieves the best restoration effect. This is attributed to the prior guidance provided by extracting the low-frequency phase spectrum and amplitude spectrum with smaller perturbations during the image restoration process, which allows for effective recovery of high-frequency details. Our method decreases standard accuracy  by 2.1\%, but improves robust accuracy by 3.23\%, respectively. With average accuracy improves by 0.55\%. In general, these experiments can demonstrate the effectiveness of our approach.\\
\subsection{Ablation Study}
To demonstrate the effectiveness of our method, we conduct comprehensive ablation experiments. We test the standard accuracy and robust accuracy. We divide our method into several parts: the first part is amplitude spectrum exchange (ASE), and the other part is phase spectrum project (PSP). The backbone we chose is WideResNet-28-10, and the attack method is AutoAttack with $\ell_{\infty}$ and $\epsilon=8/255$.
\\ After removing ASE and PSP, the method becomes the same as \cite{robust-evaluation}. However, due to the different number of time-step used in our method, there are some differences in the results compared to the original method. All ablation studies are conducted under our predefined noise time-step to verify the effectiveness of each module.
From the table \ref{ablation}, we can see that removing either ASE or PSP will affect standard accuracy and robust accuracy, proving that our improvement is effective. 
\begin{table}[ht]
    \footnotesize
  \centering
  \caption{Standard accuracy and robust accuracy under different combinations. WideResNet28-10 servers as the backbone.}
    \begin{tabular}{ccccc}
    \toprule
     ASE  &  PSP    & Standard & Robust & Average  \\
    \midrule 
    \XSolidBrush&\XSolidBrush   & 91.79& 71.68& 81.73\\
    \Checkmark&\XSolidBrush   & 89.07&  77.35 &83.21\\
     \XSolidBrush&\Checkmark   & 91.41 & 76.17 &83.79\\
    \Checkmark&\Checkmark   &\textbf{92.19}  &\textbf{77.35} &\textbf{84.77} \\
       \bottomrule
    \end{tabular}%
  \label{ablation}%
\end{table}%

\section{Conclusion}
We analyze how adversarial perturbations disrupt the amplitude spectrum and phase spectrum from the perspective of the frequency domain. Additionally, we theoretically prove that current diffusion-based adversarial purification methods excessively damage images. Based on these, we propose refining the low-frequency amplitude spectrum and phase spectrum of the estimated image at each time-step during the reverse process. This approach not only preserves some structural information and image content but also guides the restoration of high-frequency components. However, both our approach and previous work still face challenges in accurately and quickly calculating gradients. This poses difficulties in evaluating the effectiveness of defense methods.

\section*{Impact Statement}
The work presented in this paper seeks to significantly advance the emerging field of machine learning security, specifically addressing vulnerabilities related to adversarial attacks. Recognizing that adversarial images pose severe threats to the integrity and reliability of machine learning models, we introduce a novel purification method leveraging diffusion models from a frequency domain perspective. Our approach uniquely exploits the frequency characteristics inherent to adversarial perturbations, enabling efficient and effective mitigation of malicious modifications to input images. This purification process enhances the robustness of model predictions, safeguarding them against a broad spectrum of adversarial strategies and ultimately improving their reliability and trustworthiness in real-world applications.
\section*{Acknowledge}
This work was supported in part by the National Science and Technology Major Project 2022ZD0119204, in part by National Natural Science Foundation of China: 62236008, 62441232, U21B2038, U23B2051, 62122075 and 62376257, in part by Youth Innovation Promotion Association CAS, in part by the Strategic Priority Research Program of the Chinese Academy of Sciences, Grant No. XDB0680201, in part by the Fundamental Research Funds for the Central Universities, in part by Xiaomi Young Talents Program.
\bibliography{example_paper}
\bibliographystyle{icml2025}

\newpage
\appendix
\onecolumn
\section{More Experimental Results}


\subsection{Hyperparameter Sensitivity Analysis}
In this section, we conduct a hyperparameter sensitivity analysis, and our method includes three hyperparameters: one for controlling the retention of the amplitude spectrum $D_A$, and the other two for controlling the retention of the phase spectrum $D_P$ and the projection range $\delta$. WideResNet28-10 serves as the classifier and we use PGD as the attack method with $\ell_{\infty}$ and $\epsilon=8/255$. From Figure \ref{hyper}, we can see that the best performance is achieved when $D_A=3$ and $D_P=2, \delta=0.2$.
\begin{figure*}[ht]
    \centering
    \includegraphics[width=0.7\textwidth]{./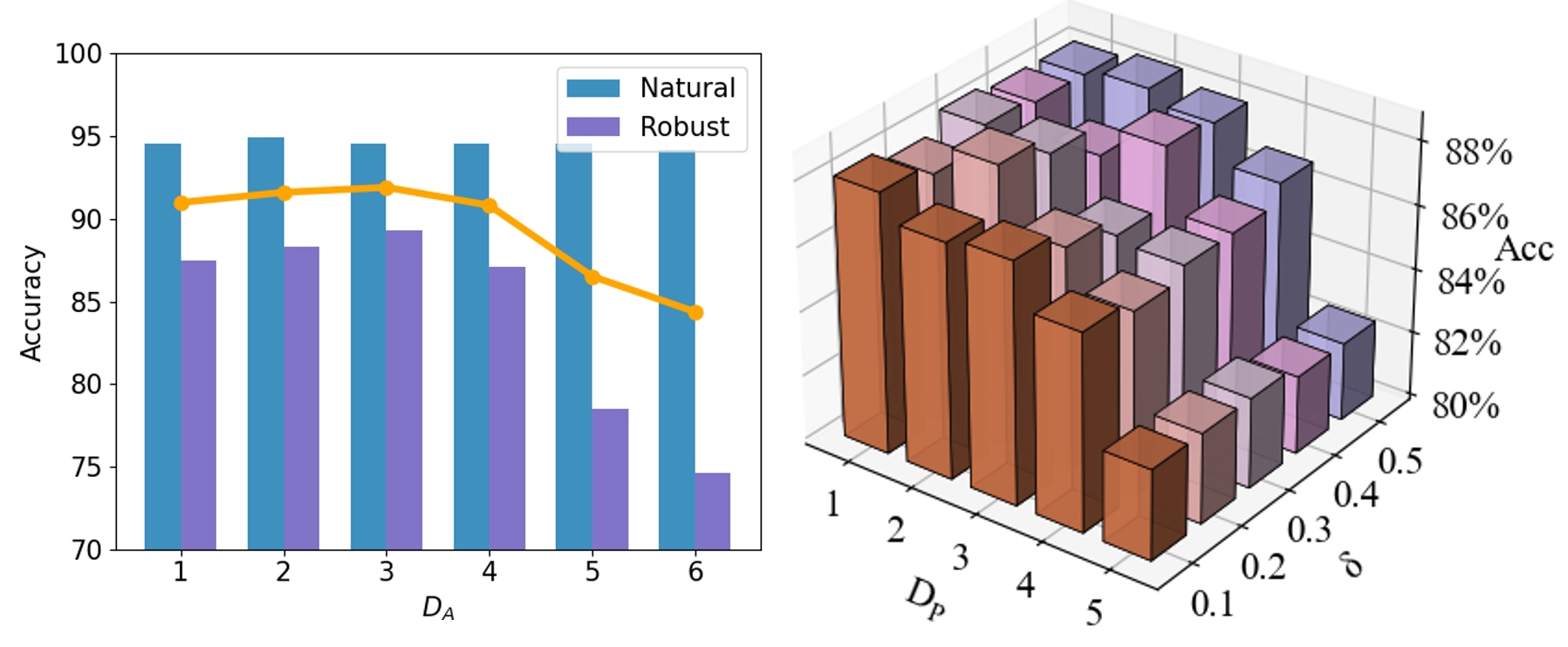}
    \caption{Robust and Standard Accuracy under different thresholds $D_A$  (left) and under different combinations of  $D_P$ and $\delta$ (right).}
    \label{hyper}
\end{figure*}
\subsection{Adaptive Attack and Surrogate Process}
Strong adaptive attacks \cite{sa1,sa2} require computing full gradients of diffusion-based adversarial purification methods. \cite{diffpure}  proposes to use the
adjoint method to compute full gradients of the
reverse generative process. The adjoint method
can compute the exact gradient in theory, but in practice, the adjoint relies on the performance of the numerical solver, whose performance becomes problematic in some cases as reported by \cite{zhuang2020adaptive}. Furthermore, the experiments conducted by \cite{robust-evaluation} reveal that this method tends to overestimate the robustness of the defensive measures. As suggested in \cite{robust-evaluation}, they use the approximated gradient obtained from a surrogate process. The surrogate process utilizes the fact that given the total amount of noise, we can denoise the same amount of noise with different numbers of denoising steps. Therefore, instead of using the entire denoising steps, we can mimic the original denoising process with fewer function calls, whose gradients can be obtained by back-propagating the forward and denosing process directly. To investigate whether our defense method is sensitive to the varying number of denoising steps in the surrogate process, we conducted an experimental analysis. 
\section{Discrete Fourier Transform}
\subsection{Preliminary}
\label{FFT}
Given an image $\mathbf{x}\in \mathbb{R}^{H\times W}$, we perform a two-dimensional discrete Fourier transform (DFT) on it:
\begin{equation}
\begin{aligned}
    \mathbf{x}(u,v)=DFT(\mathbf{x}(x,y))=\sum_{x=0}^{H-1}\sum_{y=0}^{W-1}\mathbf{x}(x,y)e^{j2\pi(\frac{ux}{H}+\frac{vy}{W})},
\end{aligned}
\end{equation}
where $u=0,1,2,...,H-1$ and $v=0,1,2,...,W-1$.\\
When the transform $\mathbf{x}(u,v)$ is known, $\mathbf{x}(x,y)$ can be obtained using the inverse discrete Fourier transform (IDFT):
\begin{equation}
    \mathbf{x}(x,y) = IDFT(\mathbf{x}(u,v))=\frac{1}{HW}\sum_{u=0}^{H-1}\sum_{v=0}^{W-1}\mathbf{X}(u,v)e^{j2\pi(\frac{ux}{H}+\frac{vy}{W})},
\end{equation}
where  $x=0,1,2,...,H-1$ and $y=0,1,2,...,W-1$.\\
Since the two-dimensional discrete Fourier transform is usually a complex function, it can be represented in polar coordinates:
\begin{equation}
    \mathbf{x}(u,v)=R(u,v)+jI(u,v)=|\mathbf{x}(u,v)|e^{j\phi(u,v)}.
\end{equation}
The method for calculating the amplitude is as follows:
\begin{equation}
    |\mathbf{x}(u,v)|=\sqrt{R^2(u,v)+I^2(u,v)}.
\end{equation}
The method for calculating the phase is as follows:
\begin{equation}
    \phi(u,v)=\arctan(\frac{I(u,v)}{R(u,v)}).
\end{equation}
\subsection{Property}
(Linearity) For any two images $\mathbf{x}$ and 
$\mathbf{y}$, where $a$ and $b$ are constants, then:
\begin{equation}
\begin{aligned}
    DFT(a\cdot \mathbf{x}(x,y)+b\cdot \mathbf{y}(x,y)) 
    &=DFT(a\cdot \mathbf{x}(x,y))+ DFT(b\cdot \mathbf{y}(x,y))\\
    &=a\cdot DFT(\mathbf{x}(x,y))+b\cdot  DFT(\mathbf{y}(x,y))\\
    &= a \cdot \mathbf{x}(u,v) + b \cdot \mathbf{y}(u,v).
\end{aligned}
\end{equation}
\section{Distribution of Adversarial Perturbations in the Frequency Domain}
\label{More}
\subsection{Experiment Settings}
We decompose the images into amplitude spectrum and phase spectrum using the discrete Fourier transform, exploring how adversarial perturbations affect the amplitude spectrum and phase spectrum. Here, we randomly selected 512 images from the ImageNet dataset. We tested different attack methods, including AutoAttack \cite{autoattack}, Projected Gradient Descent (PGD \cite{PGD}) under $\ell_\infty$ and $\ell_2$ nrom, as well as various perturbation radii, using different models including ResNet50 \cite{resnet}, VGG19  \cite{vgg19}, ViT \ref{vit} \cite{vit}, DenseNet \ref{densenet} \cite{huang2017densely} and ConvNeXT \ref{convnext} \cite{liu2022convnet}.\\
Given two images, one normal image $\mathbf{x}$ and one adversarial image $\mathbf{x}_{adv}$, we decompose them into amplitude spectrum and phase spectrum using the discrete Fourier transform as follows:
\begin{equation}
\begin{aligned}
    \mathbf{x}(u,v)&=DFT(\mathbf{x})=|\mathbf{x}(u,v)|e^{i\phi_{\mathbf{x}}(u,v)},\\
    \mathbf{x}_{adv}(u,v)&=DFT(\mathbf{x}_{adv})=|\mathbf{x}_{adv}(u,v)|e^{i\phi_{\mathbf{x}_{adv}}(u,v)}.
\end{aligned}  
\end{equation}
To investigate the variation of adversarial perturbations with frequency, we calculate the differences of the  amplitude spectrum and phase spectrum between the normal image and the adversarial image.\\
For the  amplitude spectrum, the amplitude spectrum of the image exhibits low-pass characteristics with respect to frequency, specifically:
\begin{equation}
    |\mathbf{x}(u,v)|\propto D(u,v)^{-\alpha},
\end{equation}
Typically, the parameter $\alpha$ is 2 or 3. Due to the power-law distribution characteristic of the  amplitude spectrum, we choose to quantify the differences between the  amplitude spectra of adversarial images and normal images using rhe absolute value of the percentage difference:
\begin{equation}
    \mathbb{E}(|\frac{|\mathbf{x}_{adv}(u,v)|-|\mathbf{x}(u,v)|}{|\mathbf{x}(u,v)|}|),
\end{equation}
The distribution of the phase spectrum is typically random and closely related to the specific content of the image. Its values range from 0 to 2$\pi$. Therefore, we choose to measure the differences between the phase spectra of adversarial images and nromal images using the absolute value of the differences:
\begin{equation}
    \mathbb{E}(|\phi_{\mathbf{x}}(u,v)-\phi_{\mathbf{x}_{adv}}(u,v)|).
\end{equation}
\subsection{More Experiment Results}
\begin{figure*}[ht]
    \centering
    \includegraphics[width=0.6\textwidth]{./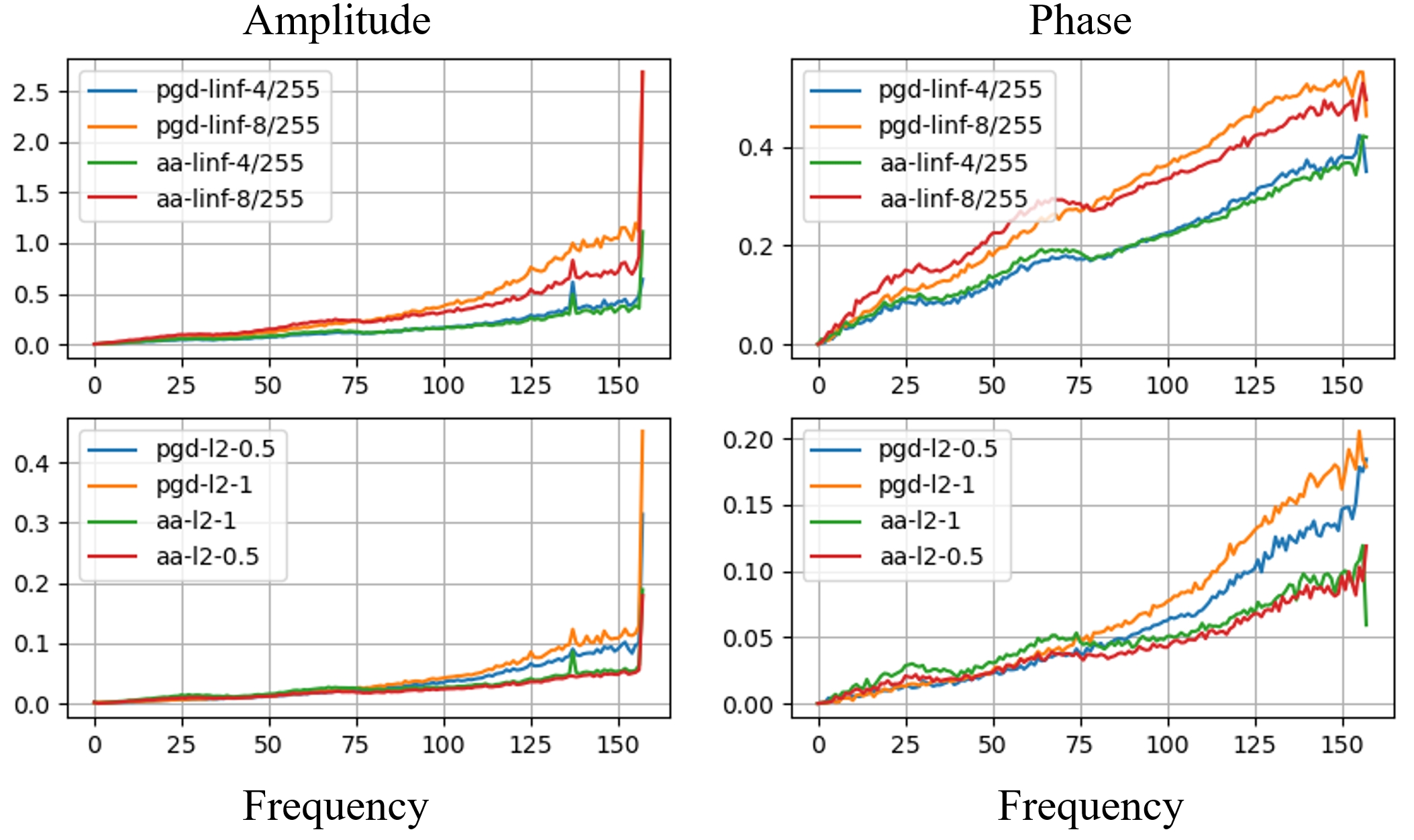}
    \caption{\textbf{Vision Transformer} serves as backbone. We decompose the image into the amplitude spectrum (left) and the phase spectrum (right), and calculate the differences between the adversarial images and the normal images, respectively.}
    \label{vit}
\end{figure*}
\begin{figure*}[ht]
    \centering
    \includegraphics[width=0.6\textwidth]{./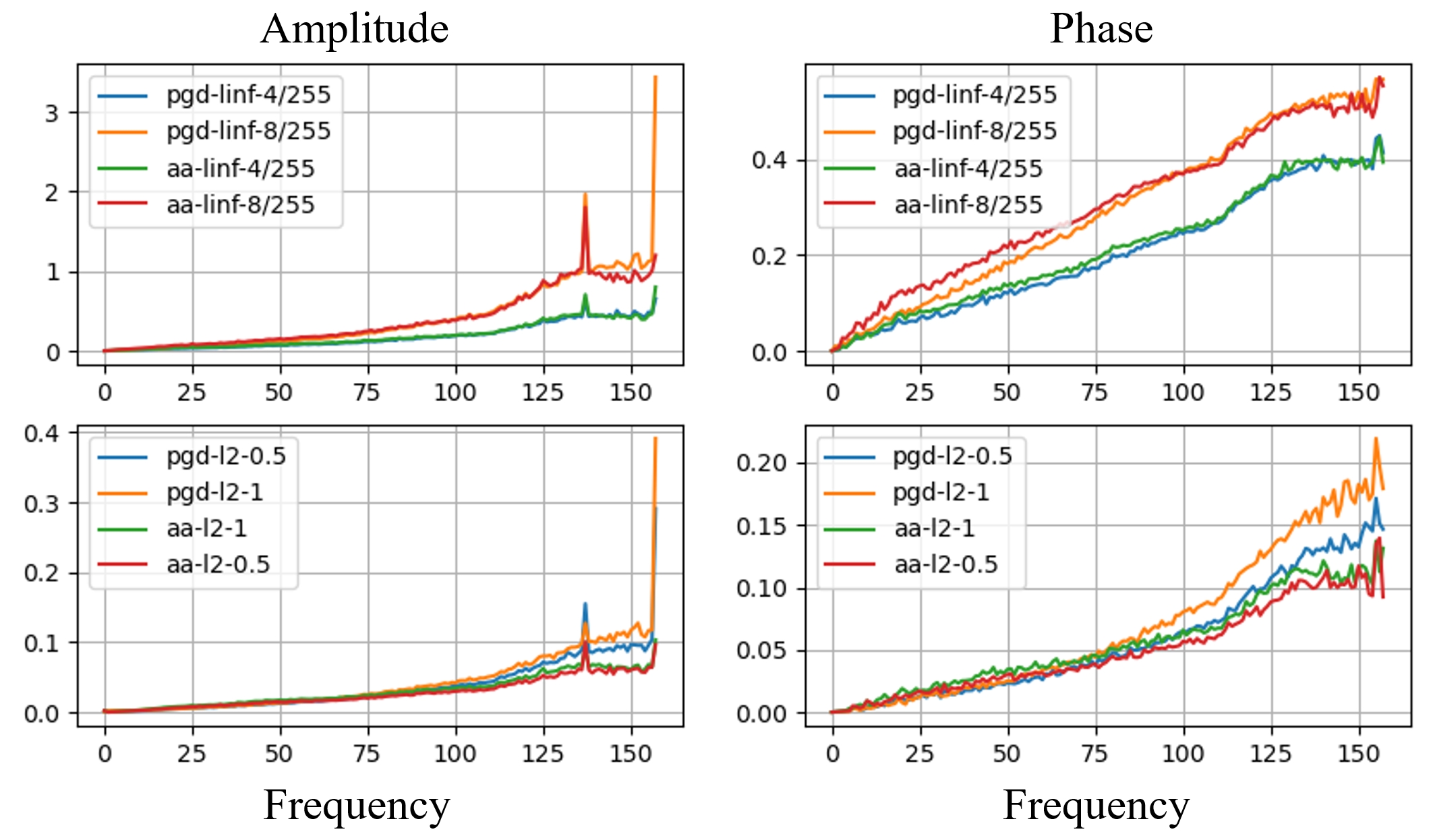}
    \caption{\textbf{DenseNet121} serves as backbone. We decompose the image into the amplitude spectrum (left) and the phase spectrum (right), and calculate the differences between the adversarial images and the normal images, respectively.}
    \label{densenet}
\end{figure*}
\begin{figure*}[ht]
    \centering
    \includegraphics[width=0.6\textwidth]{./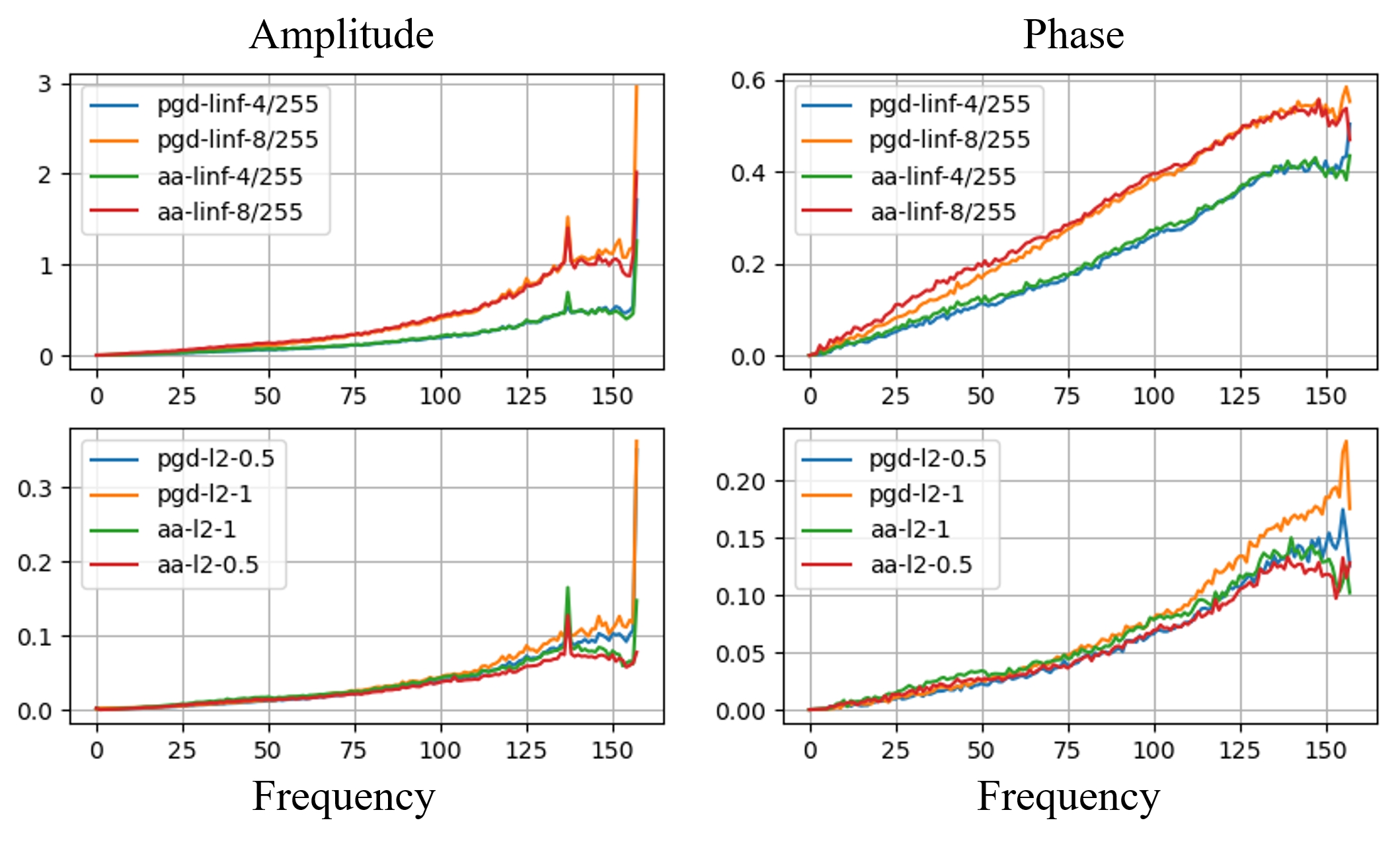}
    \caption{\textbf{ConvNeXT} serves as backbone. We decompose the image into the amplitude spectrum (left) and the phase spectrum (right), and calculate the differences between the adversarial images and the normal images, respectively.}
    \label{convnext}
\end{figure*}

\clearpage
\section{Proof}

\subsection{Proof of Theorem 3.2.}
\label{proof-3-2}

\begin{theorem} (\textbf{Modified Edition})
The variance of the difference of amplitude at time-step $t$ between clean image $\mathbf{x}_0$ and noisy image $\mathbf{x}_t$ at arbitrary coordinates $(u,v)$ at frequency domain is as follows:
    \begin{equation}
    \begin{aligned}
        Var(\Delta A_t(u,v))\approx\frac{1-\overline{\alpha}_t}{2} -\frac{(1-\overline{\alpha}_t)^2}{16|\mathbf{x}_0(u,v)|^2\overline{\alpha}_t}.
    \end{aligned}
    \end{equation}
    The RHS is monotonically increasing with respect to $t$, This means that as $t$ increases, the amplitude spectrum of the original image at arbitrary coordinate $(u,v)$  is increasingly disrupted by noise.
\end{theorem}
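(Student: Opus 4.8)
The plan is to work in the frequency domain where the forward process is, by linearity of the DFT, still additive Gaussian noise: $\mathbf{x}_t(u,v) = \sqrt{\overline{\alpha}_t}\,\mathbf{x}_0(u,v) + \sqrt{1-\overline{\alpha}_t}\,\boldsymbol{\epsilon}(u,v)$, where $\boldsymbol{\epsilon}(u,v)$ is the DFT of a standard Gaussian vector and hence a complex Gaussian with independent real and imaginary parts. First I would write $\mathbf{x}_0(u,v) = |\mathbf{x}_0(u,v)|e^{i\phi_{\mathbf{x}_0}(u,v)}$ and decompose the noise into a component parallel to $\mathbf{x}_0(u,v)$ and one orthogonal to it; call the corresponding real Gaussians $\epsilon_\parallel,\epsilon_\perp$, each with variance (up to the normalization convention) $\tfrac12$ of the relevant total, since a complex Gaussian splits its energy evenly between the two real directions. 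Then $|\mathbf{x}_t(u,v)|^2 = \big(\sqrt{\overline{\alpha}_t}|\mathbf{x}_0(u,v)| + \sqrt{1-\overline{\alpha}_t}\,\epsilon_\parallel\big)^2 + (1-\overline{\alpha}_t)\epsilon_\perp^2$.

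The key step is a second-order Taylor expansion of $|\mathbf{x}_t(u,v)| = \sqrt{|\mathbf{x}_t(u,v)|^2}$ around the noiseless value $\sqrt{\overline{\alpha}_t}|\mathbf{x}_0(u,v)|$, treating $\sqrt{1-\overline{\alpha}_t}$ as the small parameter (this is the ``first-order approximation'' regime, valid when the SNR is not too small). Writing $a = \sqrt{\overline{\alpha}_t}|\mathbf{x}_0(u,v)|$ and $\eta = \sqrt{1-\overline{\alpha}_t}$, one expands $\sqrt{(a+\eta\epsilon_\parallel)^2 + \eta^2\epsilon_\perp^2} = a + \eta\epsilon_\parallel + \tfrac{\eta^2}{2a}\epsilon_\perp^2 - \tfrac{\eta^2}{2a}\epsilon_\parallel^2\cdot 0 + \dots$ — more carefully, $\sqrt{a^2 + 2a\eta\epsilon_\parallel + \eta^2(\epsilon_\parallel^2+\epsilon_\perp^2)} = a\sqrt{1 + \tfrac{2\eta\epsilon_\parallel}{a} + \tfrac{\eta^2(\epsilon_\parallel^2+\epsilon_\perp^2)}{a^2}}$ and apply $\sqrt{1+z} \approx 1 + \tfrac{z}{2} - \tfrac{z^2}{8}$. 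Collecting terms through order $\eta^2$ gives $|\mathbf{x}_t(u,v)| - a \approx \eta\epsilon_\parallel + \tfrac{\eta^2}{2a}\epsilon_\perp^2$ (the $\epsilon_\parallel^2$ contributions from the two expansion terms cancel). Hence $\Delta A_t(u,v) = |\mathbf{x}_t(u,v)| - |\mathbf{x}_0(u,v)| = (a - |\mathbf{x}_0(u,v)|) + \eta\epsilon_\parallel + \tfrac{\eta^2}{2a}\epsilon_\perp^2$; the first bracket is deterministic and drops out of the variance. Then $Var(\Delta A_t) \approx \eta^2 Var(\epsilon_\parallel) + \tfrac{\eta^4}{4a^2}Var(\epsilon_\perp^2)$, and plugging in $Var(\epsilon_\parallel) = \tfrac12$, $Var(\epsilon_\perp^2) = 2\cdot(\tfrac12)^2 = \tfrac12$ (for a variance-$\tfrac12$ Gaussian), together with $a^2 = \overline{\alpha}_t|\mathbf{x}_0(u,v)|^2$, yields exactly $\tfrac{1-\overline{\alpha}_t}{2} - \tfrac{(1-\overline{\alpha}_t)^2}{16\,|\mathbf{x}_0(u,v)|^2\,\overline{\alpha}_t}$ after a small sign/normalization check (the stated sign is negative, so I expect the cross-term between $\eta\epsilon_\parallel$ and the $\eta^2$ correction, or a subtraction from the leading term in the expansion, to be what produces it — I would track the $-\tfrac{z^2}{8}$ term carefully, since $Var$ of the full expression picks up $-\tfrac{\eta^2}{a}\cdot(\text{something})$ style contributions).

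Finally, to get monotonicity in $t$: since $\overline{\alpha}_t$ decreases monotonically in $t$, it suffices to show the RHS, viewed as a function $f(\overline{\alpha})$ on $(0,1]$ with $|\mathbf{x}_0(u,v)|$ fixed, is \emph{decreasing} in $\overline{\alpha}$. Write $s = 1-\overline{\alpha} \in [0,1)$ and $c = |\mathbf{x}_0(u,v)|^2$; then $f = \tfrac{s}{2} - \tfrac{s^2}{16c(1-s)}$, and I would differentiate with respect to $s$ and check the derivative is positive on the relevant range (which holds when $c$ is not pathologically small, i.e. the approximation regime), so $f$ increases in $s = 1-\overline{\alpha}_t$, hence increases in $t$. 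The main obstacle I anticipate is not the algebra but pinning down the normalization convention for the DFT and the resulting variance of $\boldsymbol{\epsilon}(u,v)$'s real/imaginary parts so that the constants $\tfrac12$ and $\tfrac1{16}$ come out exactly as stated; a secondary subtlety is justifying that the discarded higher-order terms in the Taylor expansion are genuinely negligible, which really requires $SNR_t(u,v)$ to be bounded away from the regime where $|\mathbf{x}_t(u,v)|$ can be close to zero.
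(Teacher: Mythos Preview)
Your setup (linearity of the DFT, complex Gaussian noise, decomposition into parallel/perpendicular components, high-SNR regime) is fine and in fact matches the paper. The gap is in how you extract the variance. Computing $Var$ directly from the second-order Taylor expansion $|\mathbf{x}_t|\approx a+\eta\epsilon_\parallel+\tfrac{\eta^2}{2a}\epsilon_\perp^2$ gives
\[
Var(\Delta A_t)\;\approx\;\eta^2\,Var(\epsilon_\parallel)+\frac{\eta^4}{4a^2}\,Var(\epsilon_\perp^2)
=\frac{1-\overline\alpha_t}{2}\;+\;\frac{(1-\overline\alpha_t)^2}{8\,\overline\alpha_t|\mathbf{x}_0(u,v)|^2},
\]
with the \emph{wrong sign and the wrong constant}. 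Your hunch that ``the $-z^2/8$ term'' fixes it is off: that term was already consumed in cancelling the $\epsilon_\parallel^2$ contribution. To flip the sign via your route you would need the next-order mixed term $-\tfrac{\eta^3}{2a^2}\epsilon_\parallel\epsilon_\perp^2$ in the magnitude expansion and its covariance with the leading $\eta\epsilon_\parallel$; even then you do not land on $-\tfrac{1}{16}$.

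The paper does \emph{not} compute the variance from a truncated expansion of $|\mathbf{x}_t|$. It observes that $|\mathbf{x}_t(u,v)|$ is Rice-distributed, uses the \emph{exact} second moment
\[
\mathbb{E}\big[|\mathbf{x}_t(u,v)|^2\big]=\overline\alpha_t|\mathbf{x}_0(u,v)|^2+(1-\overline\alpha_t),
\]
and takes the high-SNR Rice mean approximation $\mathbb{E}[|\mathbf{x}_t|]\approx\nu+\tfrac{\sigma^2}{2\nu}=\sqrt{\overline\alpha_t}\,|\mathbf{x}_0|+\tfrac{1-\overline\alpha_t}{4\sqrt{\overline\alpha_t}\,|\mathbf{x}_0|}$ (this is exactly what your expansion gives after taking expectation). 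Then $Var=\mathbb{E}[|\mathbf{x}_t|^2]-(\mathbb{E}[|\mathbf{x}_t|])^2$ yields
\[
\overline\alpha_t|\mathbf{x}_0|^2+(1-\overline\alpha_t)-\Big(\sqrt{\overline\alpha_t}|\mathbf{x}_0|+\tfrac{1-\overline\alpha_t}{4\sqrt{\overline\alpha_t}|\mathbf{x}_0|}\Big)^2
=\frac{1-\overline\alpha_t}{2}-\frac{(1-\overline\alpha_t)^2}{16\,\overline\alpha_t|\mathbf{x}_0|^2},
\]
so both the minus sign and the $1/16$ fall out of this subtraction rather than from tracking higher-order Taylor terms. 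Your monotonicity argument at the end is essentially what the paper asserts and is fine.
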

\begin{proof}
The forward process of DDPM \cite{DDPM} in the time domain is as follows:
\begin{equation}
\label{forward}
     \mathbf{x}_t = \sqrt{\overline{\alpha}_t}\mathbf{x}_0+\sqrt{1-\overline{\alpha}_t}\mathbf{\mathbf{\epsilon}},
\end{equation}
where $t$ denotes the time-step.  $\mathbf{x}_0$ denotes the input clean image. $\mathbf{\mathbf{\epsilon}}\sim\mathcal{N}(0,\mathbf{I})$ denotes the gaussian noise. To analyze the impact of noise on different frequency components, we transform the \eqref{forward} from time domain into the frequency domain via  discrete Fourier transform  as follows:
\begin{equation}
\begin{aligned}
    DFT(\mathbf{x}_t)&=DFT (\sqrt{\overline{\alpha}_t}\mathbf{x}_0+\sqrt{1-\overline{\alpha}_t}\mathbf{\mathbf{\epsilon}})\\
    &=\sqrt{\overline{\alpha}_t}DFT(\mathbf{x}_0)+\sqrt{1-\overline{\alpha}_t}DFT(\mathbf{\mathbf{\epsilon}}),\\
\end{aligned}
\end{equation}
To simplify the equation. We omit the DFT and use $\mathbf{x}_0(u,v)$, $\mathbf{x}_t(u,v)$ and $\mathbf{\epsilon}(u,v)$ at coordinate $(u,v)$ at frequency daomain:
\begin{equation}
    \mathbf{x}_t(u,v) = \sqrt{\overline{\alpha}_t}\mathbf{x}_0(u,v)+\sqrt{1-\overline{\alpha}_t}\mathbf{\epsilon}(u,v),
\end{equation}
To demonstrate the forward process will damage the amplitude spectrum, in the following we give the corresponding proof:
    \begin{equation}
    \begin{aligned}
        \mathbb{E}(|\mathbf{x}_t(u,v)|^2)&=\mathbb{E}(|\sqrt{\overline{\alpha}_t}\mathbf{x}_0(u,v)+\sqrt{1-\overline{\alpha}_t}\mathbf{\epsilon}(u,v)|^2)\\
        &=\mathbb{E}(|\sqrt{\overline{\alpha}_t}\mathbf{x}_0(u,v)|^2+|\sqrt{1-\overline{\alpha}_t}\mathbf{\epsilon}(u,v)|^2+2\mathfrak{R}\mathfrak{e}\{\sqrt{\overline{\alpha}_t}\mathbf{x}_0(u,v)*\sqrt{1-\overline{\alpha}_t}\mathbf{\epsilon}(u,v)^*\})\\
        &=\mathbb{E}(\overline{\alpha}_t |\mathbf{x}_0(u,v)|^2+(1-\overline{\alpha}_t)|\mathbf{\epsilon}(u,v)|^2+2\sqrt{\overline{\alpha}_t}\sqrt{1-\overline{\alpha}_t}\mathfrak{R}\mathfrak{e}\{\mathbf{x}_0(u,v)*\mathbf{\epsilon}(u,v)^*\}),\\
    \end{aligned}
    \end{equation}
    where, $\mathfrak{R}\mathfrak{e}$ represents the real part and $\mathbf{\epsilon}(u,v)^*$  is the complex conjugate of $\mathbf{\epsilon}(u,v)$.
    The power spectral density of the noise is flat, and the noise is independent at different frequencies. Its mean is 0, and the variance is a constant (When the distribution follows a standard normal distribution, the variance is 1.):
    \begin{equation}
    \begin{aligned}
        &\mathbb{E}(\mathbf{\epsilon}(u,v))=0\\
        &\mathbb{E}(|\mathbf{\epsilon}(u,v)|^2)=\sigma^2=1
    \end{aligned}    
    \end{equation}
    Therefore, 
    \begin{equation}
        \begin{aligned}
            \mathbb{E}(|\mathbf{x}_t(u,v)|^2)&=\mathbb{E}(\overline{\alpha}_t |\mathbf{x}_0(u,v)|^2+(1-\overline{\alpha}_t)|\mathbf{\epsilon}(u,v)|^2+2\sqrt{\overline{\alpha}_t}\sqrt{1-\overline{\alpha}_t}\mathfrak{R}\mathfrak{e}\{\mathbf{x}_0(u,v)*\mathbf{\epsilon}(u,v)^*\})\\
            &=\overline{\alpha}_t |\mathbf{x}_0(u,v)|^2+(1-\overline{\alpha}_t)\mathbb{E}(|\mathbf{\epsilon}(u,v)|^2)+2\sqrt{\overline{\alpha}_t}\sqrt{1-\overline{\alpha}_t}\mathfrak{R}\mathfrak{e}\{\mathbf{x}_0(u,v)*\mathbb{E}(\mathbf{\epsilon}(u,v)^*)\}\\
            &=\overline{\alpha}_t |\mathbf{x}_0(u,v)|^2+(1-\overline{\alpha}_t),
        \end{aligned}
    \end{equation}
    \begin{equation}
    \begin{aligned}
    \mathbf{x}_t(u,v) =& \sqrt{\overline{\alpha}_t}\mathbf{x}_0(u,v)+\sqrt{1-\overline{\alpha}_t}\mathbf{\epsilon}(u,v)\\
    &=\underbrace{\mathfrak{R}\mathfrak{e}(\sqrt{\overline{\alpha}_t}\mathbf{x}_0(u,v))+\mathfrak{R}\mathfrak{e}(\sqrt{1-\overline{\alpha}_t}\mathbf{\epsilon}(u,v))}_{\sim\mathcal{N}(\mathfrak{R}\mathfrak{e}(\sqrt{\overline{\alpha}_t}\mathbf{x}_0(u,v)),\frac{1-\overline{\alpha}_t}{2})}+i\underbrace{(\mathfrak{I}\mathfrak{m}(\sqrt{\overline{\alpha}_t}\mathbf{x}_0(u,v))+\mathfrak{I}\mathfrak{m}(\sqrt{1-\overline{\alpha}_t}\mathbf{\epsilon}(u,v)))}_{\sim\mathcal{N}(\mathfrak{I}\mathfrak{m}(\sqrt{\overline{\alpha}_t}\mathbf{x}_0(u,v)),\frac{1-\overline{\alpha}_t}{2})},
    \end{aligned}
    \end{equation}
We can see that the means of the real part and the imaginary part are different, the variances are the same, and they are independent of each other \cite{richards2013discrete}.  Therefore, the amplitude $|\mathbf{x}_t(u,v)|$ follows a Rice distribution:
\begin{equation}
    f(|\mathbf{x}_t(u,v)|) = \frac{|\mathbf{x}_t(u,v)|}{\sigma^2} \exp\left(-\frac{|\mathbf{x}_t(u,v)|^2 + \nu^2}{2\sigma^2}\right) I_0\left(\frac{|\mathbf{x}_t(u,v)|\nu}{\sigma^2}\right),
\end{equation}
With the assumption of $SNR_t\gg1$ , we have:
\begin{equation}
    \mathbb{E}(|\mathbf{x}_t(u,v)|)\approx \nu\approx\nu+\frac{\sigma^2}{2\nu}=\sqrt{\overline{\alpha}_t}|\mathbf{x}_0(u,v)| + \frac{1-\overline{\alpha}_t}{4\sqrt{\overline{\alpha}_t}|\mathbf{x}_0(u,v)|},
\end{equation}
The variance of the difference of amplitude at time-step $t$ between clean image $\mathbf{x}_0$ and noisy image $\mathbf{x}_t$ is as follows:
\begin{equation}
    \begin{aligned}
        Var(\Delta A_t(u,v))&=Var(|\mathbf{x}_t(u,v)|-|\mathbf{x}_0(u,v)|)\\
        &=Var(|\mathbf{x}_t(u,v)|)\\
        &=\mathbb{E}(|\mathbf{x}_t(u,v)|^2)-\mathbb{E}^2(|\mathbf{x}_t(u,v)|)\\
        &\approx \overline{\alpha}_t |\mathbf{x}_0(u,v)|^2+(1-\overline{\alpha}_t) - (\sqrt{\overline{\alpha}_t}|\mathbf{x}_0(u,v)| + \frac{1-\overline{\alpha}_t}{4\sqrt{\overline{\alpha}_t}|\mathbf{x}_0(u,v)|})^2\\
        &=\frac{1-\overline{\alpha}_t}{2} -\frac{(1-\overline{\alpha}_t)^2}{16|\mathbf{x}_0(u,v)|^2\overline{\alpha}_t},
    \end{aligned}
\end{equation}
When $SNR_t$ is sufficiently large, it is clearly that $ Var(\Delta A_t(u,v))$  monotonically decreasing with $t$.
\subsection{Proof of Theorem 3.4.}
\label{proof-3-4}

\begin{theorem}
According to Theorem 3.2. in \cite{diffpure} that $t$ should be sufficiently small. Therefore, we assume $SNR_t>1$. the variance of difference of phase  $Var(\Delta\theta(u,v))$ between input image $\mathbf{x}_0$ and noisy image $\mathbf{x}_t$ at arbitrary coordinates $(u,v)$ at frequency domain is as follows:
    \begin{equation}
         Var(\Delta\theta_t(u,v))\approx\frac{1}{\sqrt{1-\frac{1}{SNR_t^2}}}-1,
    \end{equation}
    where signal to noise ratio (SNR) at time-step $t$ is defined as follows:
    \begin{equation}
        SNR_t(u,v)=\frac{\sqrt{\overline{\alpha}_t}|\mathbf{x}_0(u,v)|}{\sqrt{1-\overline{\alpha}_t}|\mathbf{\epsilon}(u,v)|}.
    \end{equation}
\end{theorem}

\begin{proof}
The forward process of DDPM \cite{DDPM} in the time domain is as follows:
\begin{equation}
     \mathbf{x}_t = \sqrt{\overline{\alpha}_t}\mathbf{x}_0+\sqrt{1-\overline{\alpha}_t}\mathbf{\mathbf{\epsilon}}.
\end{equation}
Where $t$ denotes the time-step.  $\mathbf{x}_0$ denotes the input clean image. $\mathbf{\mathbf{\epsilon}}\sim\mathcal{N}(0,\mathbf{I})$ denotes the gaussian noise. To analyze the impact of noise on different frequency components, we transform the \eqref{forward} from time domain into the frequency domain via  discrete Fourier transform  as follows:
\begin{equation}
\begin{aligned}
    DFT(\mathbf{x}_t)&=DFT (\sqrt{\overline{\alpha}_t}\mathbf{x}_0+\sqrt{1-\overline{\alpha}_t}\mathbf{\mathbf{\epsilon}})\\
    &=\sqrt{\overline{\alpha}_t}DFT(\mathbf{x}_0)+\sqrt{1-\overline{\alpha}_t}DFT(\mathbf{\mathbf{\epsilon}}),\\
\end{aligned}
\end{equation}
To simplify the equation. We omit the DFT and use $\mathbf{x}_0(u,v)$, $\mathbf{x}_t(u,v)$ and $\mathbf{\epsilon}(u,v)$ at coordinate $(u,v)$ at frequency daomain:
\begin{equation}
    \mathbf{x}_t(u,v) = \sqrt{\overline{\alpha}_t}\mathbf{x}_0(u,v)+\sqrt{1-\overline{\alpha}_t}\mathbf{\epsilon}(u,v),
\end{equation}
To demonstrate the forward process will damage the phase spectrum, in the following we give the corresponding proof: 
\begin{equation}
\begin{aligned}
    \mathbf{x}_t(u,v) &= \sqrt{\overline{\alpha}_t}\mathbf{x}_0(u,v)+\sqrt{1-\overline{\alpha}_t}\mathbf{\epsilon}(u,v)\\
    &=\sqrt{\overline{\alpha}_t}|\mathbf{x}_0(u,v)|e^{i\phi_{\mathbf{x}_0}(u,v)}+\sqrt{1-\overline{\alpha}_t}|\mathbf{\epsilon}(u,v)|e^{i\phi_{\mathbf{\epsilon}}(u,v)}\\
    &=S_te^{i\phi_{\mathbf{x}_0}(u,v)}+N_te^{i\phi_{\mathbf{\epsilon}}(u,v)}\\
    &=S_te^{i\phi_{\mathbf{x}_0}(u,v)}(1+\frac{N_t}{S_t}e^{i(\phi_{\mathbf{\epsilon}}(u,v)-\phi_{\mathbf{x}_0}(u,v))})\\
    &=S_te^{i\phi_{\mathbf{x}_0}(u,v)}(1+K_te^{i\phi}).\\
\end{aligned}
\end{equation}
Here we can get the difference between the phase of the original image $\mathbf{x}_0$ and the noisy image $\mathbf{x}_t$:
\begin{equation}
\begin{aligned}
    \Delta\theta&=\phi_{\mathbf{x}_t}(u,v)-\phi_{\mathbf{x}_0}(u,v)\\
    &=\arg(1+K_te^{i\phi})\\
    &=\arg(1+K_t(cos(\phi)+isin(\phi)))\\    &=\arg(\underbrace{1+K_t\cos(\phi)}_{Real}+\underbrace{iK_tsin(\phi))}_{Imaginary}\\
    &=\arctan(\frac{K_tsin(\phi)}{1+K_t\cos(\phi)})\\
    &\approx \frac{K_tsin(\phi)}{1+K_t\cos(\phi)}.
\end{aligned}
\end{equation}
The last line of the formula is obtained through a first-order Taylor expansion\\
The phase spectrum of Gaussian noise $\mathbf{\epsilon}$ is uniformly distributed in the range [0,2$\pi$]:
\begin{equation}
    p(\phi_{\mathbf{\epsilon}}) = \begin{cases}   
\frac{1}{2\pi} & \text{if } 0 \leq \phi_{\mathbf{\epsilon}} < 2\pi \\
0 & \text{otherwise}  
\end{cases}  .
\end{equation}
Due to the periodicity of phase, the range of $\phi_\mathbf{\epsilon}(u,v)-\phi_{\mathbf{x}_0}(u,v)$ is also uniformly distributed in the range [0,2$\pi$].\\
The expectation of the phase difference is as follows:
\begin{equation}
\begin{aligned}
    E( \Delta\theta)&=E(\frac{K_tsin(\phi)}{1+K_t\cos(\phi)})\\
    &=\frac{1}{2\pi}\int_{0}^{2\pi}\frac{K_tsin(\phi)}{1+K_t\cos(\phi)}d\phi\\
    &=0.
\end{aligned}
\end{equation}
The Variance of the phase difference is as follows:

\begin{equation}
\begin{aligned}
    Var( \Delta\theta)&=E( (\Delta\theta)^2)-(E(\Delta\theta))^2=E( (\Delta\theta)^2)\\
    &=\frac{1}{2\pi}\int_0^{2\pi}\frac{K_t^2\sin^2(\phi)}{(1+K_t\cos(\phi))^2}d\phi\\
    &=\frac{1}{2\pi}\int_0^{2\pi}K_t\sin(\phi)d(\frac{1}{1+K_tcos(\phi)})\\
    &=\frac{1}{2\pi}(\underbrace{\frac{K_t\sin(\phi)}{1+K_t\cos(\phi)}\big|_{0}^{2\pi}}_{0}-\int_{0}^{2\pi}\frac{K_tcos(\phi)}{1+K_t\cos(\phi)}d\phi)\\
    &=-\frac{1}{2\pi}\int_{0}^{2\pi}1-\frac{1}{1+K_t\cos(\phi)}d\phi\\
    &=\frac{1}{2\pi}(\int_{0}^{2\pi}\frac{1}{1+K_t\cos(\phi)}d\phi-2\pi)\\
    &=\frac{1}{2\pi}(\int_{-\infty}^{+\infty}\frac{1}{1+K_t\frac{1-t^2}{1+t^2}}\frac{2dt}{1+t^2}-2\pi)\\
    &=\frac{1}{2\pi}(\int_{-\infty}^{+\infty}\frac{2dt}{1+K_t+(1-K_t)t^2}-2\pi)\\
    &=\frac{1}{2\pi}(\frac{1}{\sqrt{1-K_t^2}}\arctan t\big|_{-\infty}^{+\infty}-2\pi)\\
    &=\frac{1}{\sqrt{1-K_t^2}}-1.
\end{aligned}
\end{equation}
\end{proof}
\end{proof}
\section{Visualization}
\subsection{ImageNet}
We randomly select some images for visualization and chose DiffPure \cite{diffpure} and REAP \cite{robust-evaluation} as the baselines. The attack method we use here is PGD. And to make the visualization effect more pronounced, with the perturbation radius set to $\ell_{\infty}=12/255$. We plot the original images, the images after adversarial attacks, and the images purified using our method and other methods.
\begin{figure*}[htbp]
    \centering
    \includegraphics[width=1.0\textwidth]{./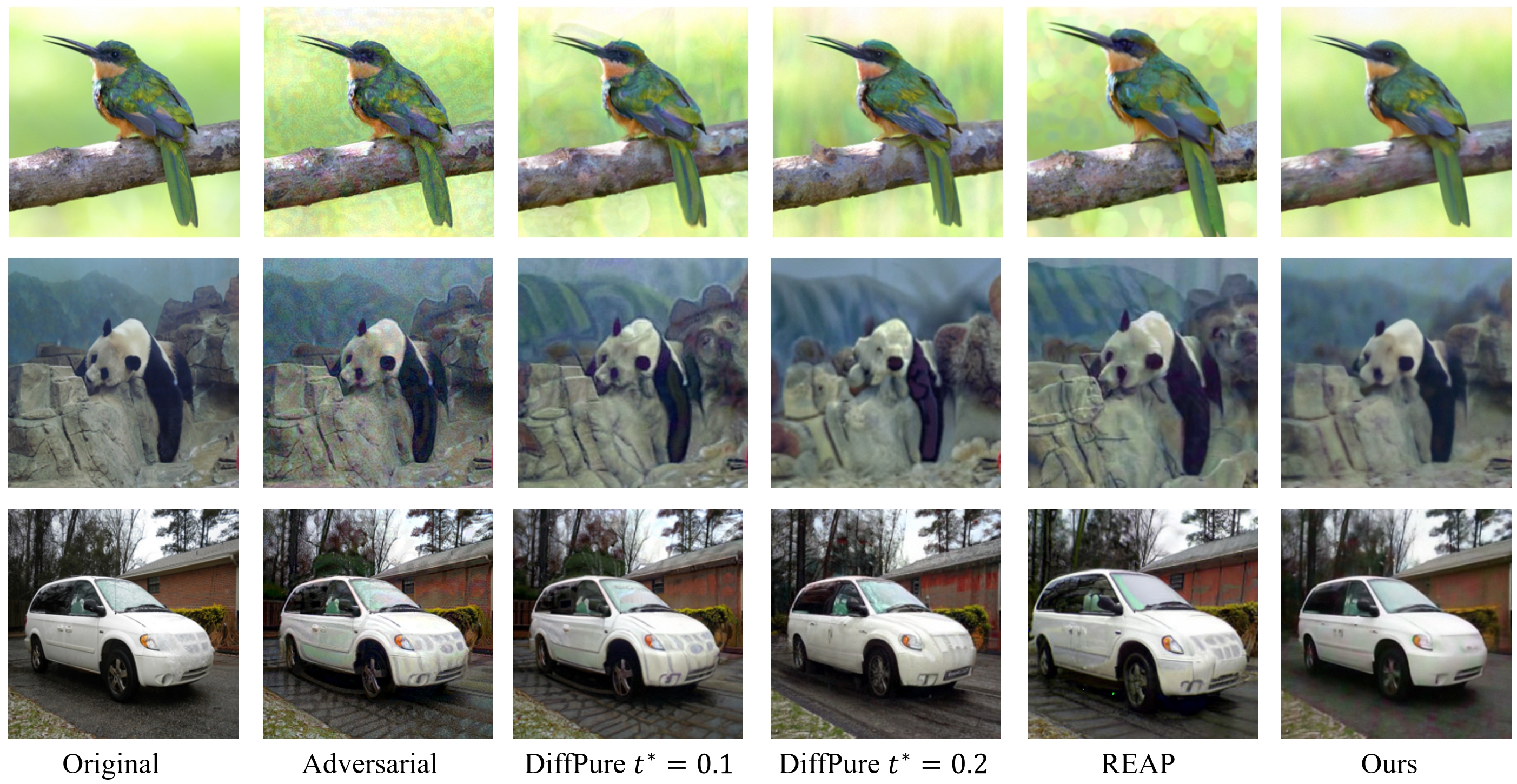}
    \caption{Visualization of some randomly selected images from ImageNet dataset.}
\end{figure*}
\begin{figure*}[htbp]
    \centering
    \includegraphics[width=1.0\textwidth]{./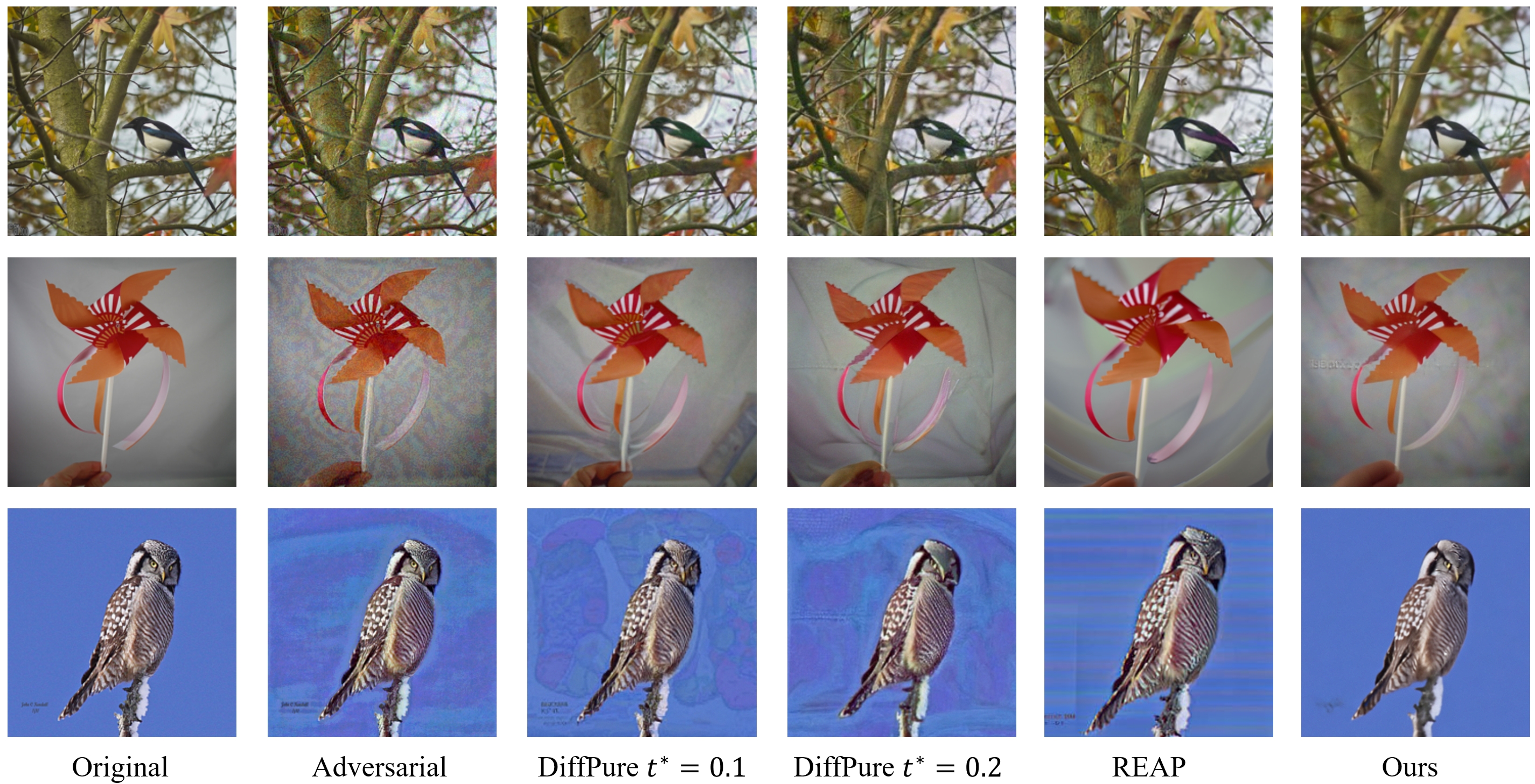}
    \caption{Visualization of some randomly selected images from ImageNet dataset.}
\end{figure*}

\subsection{CIFAR10}
We randomly select 64 images for visualization, choosing PGD as the attack method and setting the attack radius to $\ell_{\infty}=8/255$.
\begin{figure*}[h]
    \centering
    \includegraphics[width=1.0\textwidth]{./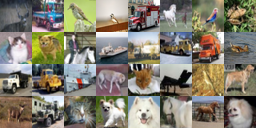}
    \caption{Visualization of original images randomly selected from CIFAR10 dataset.}
\end{figure*}
\begin{figure*}[h]
    \centering
    \includegraphics[width=1.0\textwidth]{./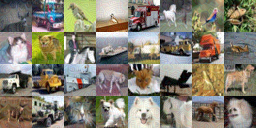}
    \caption{Visualization of adversarial images randomly selected from CIFAR10 dataset.}
\end{figure*}
\begin{figure*}[h]
    \centering
    \includegraphics[width=1.0\textwidth]{./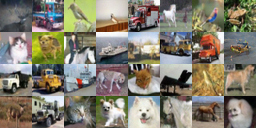}
    \caption{Visualization of purified images randomly selected from CIFAR10 dataset.}
\end{figure*}
\end{document}